\theoremstyle{plain}
\newtheorem{theorem}{Theorem}[section]
\newtheorem{proposition}[theorem]{Proposition}
\newtheorem{lemma}[theorem]{Lemma}
\newtheorem{corollary}[theorem]{Corollary}
\theoremstyle{definition}
\newtheorem{definition}[theorem]{Definition}
\theoremstyle{remark}
\renewcommand\labelenumi{(\roman{enumi})}
\renewcommand\theenumi\labelenumi
\title{Kernel Trace Distance: Quantum Statistical Metric between Measures through RKHS Density
Operators
}
\author{%
  Arturo Castellanos \\
  LTCI, Télécom Paris, Institut Polytechnique de Paris\\
  \texttt{arturo.castellanos@telecom-paris.fr}\\
  \And
  Anna Korba \\
  CREST, ENSAE, Institut Polytechnique de Paris\\
  \texttt{anna.korba@ensae.fr}
  \AND 
  Pavlo Mozharovskyi\\
  LTCI, Télécom Paris, Institut Polytechnique de Paris\\
  \texttt{pavlo.mozharovskyi@telecom-paris.fr} \\
  \And
  Hicham Janati \\
  LTCI, Télécom Paris, Institut Polytechnique de Paris \\
  \texttt{hicham.janati@telecom-paris.fr}
}
\begin{document}

\maketitle

\begin{abstract}
Distances between probability distributions are a key component of many statistical machine learning tasks, from two-sample testing to generative modeling, among others. We introduce a novel distance between measures that compares them through a Schatten norm of their kernel covariance operators. We show that this new distance is an integral probability metric that can be framed between a Maximum Mean Discrepancy (MMD) and a Wasserstein distance. In particular, we show that it avoids some pitfalls of MMD, by being more discriminative and robust to the choice of hyperparameters. Moreover, it benefits from some compelling properties of kernel methods, that can avoid the curse of dimensionality for their sample complexity. We provide an algorithm to compute the distance in practice by introducing an extension of kernel matrix for difference of distributions that could be of independent interest. Those advantages are illustrated by robust approximate Bayesian computation under contamination as well as particle flow simulations.
\end{abstract}

\section{INTRODUCTION}

Statistical distances are ubiquitous in the fundamental theory of machine learning and serve as the backbone of many of its applications, such as: discriminating between the generative model and real data in Generative Adversarial Networks (GAN)~\citep{goodfellow2014generative,arjovsky2017wasserstein, li2017mmd,genevay2018learning,birrell2022f}, testing whether a dataset is close to another (two-sample test)~\citep{eric2007testing,gretton2012kernel,hagrass2024spectral} or to a particular distribution (goodness-of-fit test), as well as acting as an objective loss function in particle gradient flows~\citep{arbel2019maximum,feydy2019interpolating,korba2021kernel,hertrich2023generative,neumayer2024wasserstein,chen2024regularized}, or in minimum distance estimators~\citep{wolfowitz1957minimum,basu2011statistical}.

A class of distances between probability distributions, called Integral Probability Metrics (IPM)~\citep{muller1997integral}, is 
defined by measuring the supremum of difference of integrals over a function space. 
It comprises many popular metrics such as the Total Variation distance, Wasserstein-1 distance 
and the Maximum Mean Discrepancy (MMD)~\citep{gretton2012kernel} also known as quadratic distance~\citep{lindsay2008quadratic}.
IPMs' theoretical properties were largely investigated in the literature, such as their statistical  convergence rate~\citep{sriperumbudur2010non}, concentration for inference using ABC~\citep{legramanti2022concentration}, PAC-Bayes bounds~\citep{amit2022integral}, as well as adversarial interpretations~\citep{husain2022adversarial}. For instance, the MMD enjoys a fast statistical  convergence rate of $O(n^{-\frac{1}{2}})$ while the Wasserstein distance 
suffers from the curse of dimensionality with a rate no better than $\Theta(n^{-\frac{1}{d}})$~\citep{kloeckner2012approximation}. 
One could wonder: \emph{how large such a function space could be before the curse of dimensionality kicks in?} 
In this work, we theoretically investigate how to get closer to
such frontier by 
defining
an extended family of kernel distances, that write as novel IPM whose dual function space is larger than the one of MMD.

 Kernel methods 
 allow to represent a distribution by a vector by associating to a datapoint $x$ a 
 feature map image $\varphi(x)$ in a Hilbert space, and by doing so, embed in a linear way a distribution $\mu$ to what is called a  \emph{(kernel) mean embedding} $\mathbb{E}_{X\sim\mu}[\varphi(X)]=\int
\varphi(x) d\mu(x)$.
However mean embeddings for different distributions may have different ``energies", i.e., squared Hilbert norms, which may lead to several pitfalls of MMD. 
In quantum information theory~\citep{watrous2018theory}, a similar idea to mean embedding is called superposition. The quantum equivalent of a datapoint or deterministic Dirac distribution is called a \emph{pure state} and is a projector of rank 
and trace one, that could be denoted $vv^*$ (or $|v\rangle \langle v|$) for a unit vector $v$. Its 
analog for a general probability distribution 
is called a \emph{mixed state} and is the superposition $\sum_v p(v) |v\rangle \langle v|$ where $p(v)$ are probabilities. A non-trivial mixed state can hardly be confused with a pure state as a linear combination of different projectors is 
of higher rank than 1: using projecting operators instead of the vectors themselves makes the \emph{linearity less ``trivial"}. As those positive definite operators can be diagonalised, by using always the same orthogonal basis and studying the eigenvalues, we recover classical probabilities, and as such we can see quantum probabilities as their extension.
Recently, the work of~\citet{bach2022} introduced a novel divergence between probability distributions, by 
plugging a kernel operator embedding of the distributions (which are also positive definite operators) in the Von Neumann relative entropy from quantum information theory (i.e., a Kullback-Leibler divergence between positive Hermitian operators), 
and whose statistical and geometrical properties were investigated more in depth in
~\citet{chazal2024statistical}. 
Instead of considering a divergence on such operators, here we propose to draw inspiration from quantum statistical metrics, which enjoy nice geometrical properties such as the triangle inequality. Two of them are well-known and mutually bounding: the Bures 
metric, 
and the trace distance, on which we focus here, and 
which is derived from a (Schatten) norm. 

\paragraph{Related works}

The kernelised version of Bures metric, i.e., a Bures metric between kernel covariance operators, has been studied for instance in~\citet{KWD,zhang2019optimal}.
The closest work to ours is the one by~\citet{mroueh2017mcgan}. They consider a similar metric to ours, i.e. the trace distance, that they refer to as Covariance Matching IPM. It shares the same dual writing as the metric we consider, yet, in that work, the dual problem is solved through a numerical program involving neural networks that approach kernel features. Hence, they compute an approximate version of their target metric. 
In contrast, we use kernel features directly in the dual formulation, and derive a closed-form for the metric leveraging a kernel trick. Moreover, we provide theoretical guarantees regarding this metric and investigate different numerical applications than the one of the GAN considered in ~\citet{mroueh2017mcgan}.

\paragraph{Contributions}
Our main contributions can be summarized as follows: 
\begin{enumerate}
    \item Inspired by quantum statistics, we introduce a novel distance between probability distributions called \emph{kernel trace distance} ($\dist$).
    \item We show that $\dist$ is an IPM and illustrate several of its theoretical properties, mainly: 
    a direct comparison to MMD,
    robustness to contamination,
    and statistical convergence rates that do not depend on the dimension. 
    \item We showcase how to compute $\dist$ and illustrate its practical performance on particle gradient flows and Approximate Bayesian Computation (ABC).
\end{enumerate}

\paragraph{Organisation of the paper}
In section~\ref{sec:definition}, we provide some background on quantum statistical distances and introduce $\dist$. In section~\ref{sec:motivation}, we explain further the motivation to introduce $\dist$, notably by comparing it with the other distances, MMD in particular. We show in section~\ref{sec:stat}, under some eigenvalue decay rate assumptions, convergence rates  that do not depend on the dimension, as well as robustness. In section~\ref{sec:computation}, we explain how to compute $\dist$. 
Finally, we illustrate our findings by experiments in section~\ref{sec:exp}.
\section{Kernel Trace Distance}
\label{sec:definition}

For a positive semi-definite kernel $k : \mathcal{X} \times \mathcal{X} \to \mathbb{R}$, its  RKHS $\mathcal{H}$ is a Hilbert space of real-valued functions with inner product $\langle \cdot,\cdot \rangle_{\mathcal{H}}$ and norm $\Vert \cdot \Vert_{\mathcal{H}}$. It is associated with a feature map $\varphi:\mathcal{X}\to \mathcal{H}$ such that $k(x,y)=\langle \varphi(x),\varphi(y)\rangle_{\mathcal{H}}$. 
We denote  $\mathcal{L}(\mathcal{H})$ the space of bounded linear operators from $\mathcal{H}$ to itself. 
For a vector $v\in\mathcal{H}$, $v^*$ denotes its dual linear form defined by $v^*(w)=\langle v,w \rangle$ for any $w\in \mathcal{H}$. For an operator $T\in\mathcal{L}(\mathcal{H})$, $T^*$ is its adjoint.  $\SL{p}\cdot\SR{p}$ denotes the $p$-Schatten norm explicited below.

\begin{assumptionp}{0}
In the whole paper, we restrict ourselves to 
the setting of 
a completely separable set $\mathcal{X}$, endowed with a Borel $\sigma$-algebra, and a separable RKHS $\mathcal{H}$ of real-valued functions on $\mathcal{X}$, with a bounded continuous strictly positive kernel.
\end{assumptionp}

\subsection{Background}

\paragraph{RKHS density operators~\citep{bach2022}.}   Let $\mu$ a measure on $\mathcal{X}$. 
    Define $\Phi$ the kernel covariance operator embedding as:
    \begin{align}\label{eq:densdef}
        \Phi : \mu \mapsto \Sigma_\mu = \int_\mathcal{X} \varphi(x)\varphi(x)^* d\mu(x).
    \end{align}
    We will call  $\Sigma_\mu$ the RKHS density operator of $\mu$, in reference to  the wording of density operator in quantum information theory: this is to insist that $\Sigma_\mu$ is an embedding in itself (with feature map $\varphi(\cdot)\varphi(\cdot)^*$), 
    rather than just the covariance of a mean embedding with feature map $\varphi$. The operator $\Sigma_\mu$ is self-adjoint, and positive semidefinite when $\mu$ is a probability measure.
    To keep the analogy with quantum density operators, similarly to~\citet{bach2022}, we consider kernels respecting the property:
    \begin{assumptionp}{1}\label{assum:kernel1}
    $    \forall x\in\mathcal{X}, \;k(x,x)=1.$ 
    \end{assumptionp}
    to ensure $\Tr \Sigma_\mu = 1$ (as in the sum of all probabilities equals one).
    If $\forall x\in\mathcal{X}, \;k(x,x)=M$ for a non-zero constant $M\neq 1$, it is will be easy to generalize many of our results later by dividing by $M$, so this assumption is not too restrictive.
    If the kernel does not verify Assumption~\ref{assum:kernel1} but is strictly positive, it is could also be normalised using $\Tilde{k}(x,y) = \frac{k(x,y)}{\sqrt{k(x,x)k(y,y)}}$ instead.

\paragraph{Schatten norms.}
We now provide some background on Schatten norms~\citep{simon2005trace}. For an operator $T\in\mathcal{L}(\mathcal{H})$ and $p \in [1,\infty)$, the $p$-Schatten norm is defined as 
$    \SPL T \SPR = (\Tr(|T|^p))^{1/p}$
where $|T| = \sqrt{T^* T}$.
If $T$ is compact, 
this can be rewritten as the $p$-vectorial norm of the singular values of $T$.
It also admits a dual definition, denoting $q$ such that $1/p + 1/q = 1$:
\begin{align}\label{eq:schattendual}
    \SPL T \SPR = \sup_{U \in \mathcal{L(H)}, ||U||_q = 1} \langle U,T \rangle
\end{align}
where the inner product is $\langle U,T \rangle = \Tr(U^*T)$.

The Schatten 2-norm is the Hilbert-Schmidt norm with respect to this inner product:  $||T||_2 = \sqrt{\Tr(T^*T)}$. 
Then, the Schatten $\infty$-norm 
is the operator norm : $\SL{\infty}T\SR{\infty} = \sup_{x  \in \mathcal{H}\backslash 0 } \frac{||Tx||_\mathcal{H}}{||x||_\mathcal{H}}$ i.e., the maximum of the singular values of the operator in absolute value. 
 We have the following inequalities:
\begin{itemize}
    \item For $1\leq p \leq q \leq \infty$:
    $\forall T \in \mathcal{L(H)}, \SL{1} T \SR{1} \geq \SL{p}T\SR{p} \geq \SL{q}T\SR{q} \geq \SL{\infty}T\SR{\infty}.$
    \addtocounter{equation}{1}
    \item $\forall T,S \in \mathcal{L(H)}, \SL{1} TS \SR{1}  \leq \SL{2} T \SR{2} \SL{2} S \SR{2}\, .$ \qquad \qquad \qquad \qquad \qquad \qquad \qquad \qquad \quad \quad \quad (\theequation)
    \item From this, it can be deduced taking \( T \) as the identity operator, for $\mathcal{H}$ of finite dimension:
        \begin{align}\label{ineq:dim}
        \forall S \in \mathcal{L(H)}, \SL{1} S \SR{1} \leq \sqrt{\dim(\mathcal{H})} \SL{2} S \SR{2}.
    \end{align}
\end{itemize}

\subsection{Definition}
In quantum information theory, the trace distance is a mathematical tool that can be used to compare density operators by measuring the Schatten 1-norm of their difference.
Inspired by this, we define:
\begin{definition}
The \textbf{\emph{kernel trace distance}}
between two probability measures $\mu,\nu$ on $\mathcal{X}$ is defined as:
\begin{equation*}
\dist(\mu,\nu) = \SL{1} \Sigma_\mu - \Sigma_\nu \SR{1}.
\end{equation*}
\end{definition}
We will also relate it to other distances such as:
\begin{itemize}
    \item Wasserstein distances   ~\citep{villani2009optimal}:
    \begin{equation*}
        W_d(\mu,\nu) = \inf_{\pi \in \Pi(\mu,\nu)} \iint d(x,y) \mathrm d\pi(x,y)
\end{equation*}
   where  $d:\mathcal{X}\times \mathcal{X}\to \mathbb{R}^+$  is a cost and $\Pi(\mu,\nu)$ denotes all the possible couplings between $\mu$ and $\nu$. The Wasserstein-$p$ distance is obtained by replacing $d$
by 
its power $d^p$ 
in the integral and taking the $p$-root of the whole expression.
    
    \item The Bures 
distance~\citep{bhatia2019bures} on positive definite matrices $A$:
    \begin{equation*}
        d_{BW}(A,B) = \sqrt{\Tr A + \Tr B - 2 F(A,B)}
    \end{equation*}
    where $F(A,B)= \Tr (A^{1/2} B A^{1/2})^{1/2}$ is called the fidelity. It coincides with the Wasserstein-2 distance between two normal distributions (also called Bures-Wassertein distance) with identical  mean, and different covariances $A$ and $B$. The formula can be extended to operators with finite traces.
    \item The Kernel Bures distance~\citep{zhang2019optimal} is defined as:  
    \begin{equation*}
        d_{KBW}(\mu,\nu) = d_{BW}(\Sigma_\mu,\Sigma_{\nu}).
    \end{equation*}
    
    \item The Total Variation 
    is a special case of the Wasserstein distance where the cost is $d:(x,y)\mapsto 1_{x=y}$ and can be expressed as:
    \begin{equation*}
        ||\mu-\nu||_{TV} = \frac{1}{2} \int_\mathcal{X} |\mu(x)-\nu(x)|dx
    \end{equation*}

    \item The Maximum Mean Discrepancy~\citep{gretton2012kernel}:
    \small
    \begin{equation*}
        \MMD(\mu,\nu) = \left|\left|\int_\mathcal{X} k(x,\cdot)\mu(x)dx -\int_\mathcal{X} k(x,\cdot)\nu(x)dx\right|\right|_\mathcal{H}
    \end{equation*}
    \normalsize

    \item Integral Probability Metrics  (IPM)~\citep{muller1997integral}   defined as:
\begin{equation*}
            d(\mu,\nu)=\sup_{f \in \mathcal{F}} \{|\mathbb{E}_{X\sim\mu}[f(X)]-\mathbb{E}_{X\sim\nu}[f(X)]|\}
    \end{equation*}
    where the function space $\mathcal{F}$ is rich enough to make this expression a metric. The Wasserstein-1 distance, the TV and MMD are IPMs (with $\mathcal{F}$ being 1-Lipschitz functions w.r.t. $\|\cdot\|$, functions with values in [-1,1], and a RKHS unit ball respectively).
    
\end{itemize}

\begin{proposition}
If $k^2$ is characteristic i.e $\Phi$ is injective, 
$\dist$ and $d_{KBW}$ are metrics.
\end{proposition}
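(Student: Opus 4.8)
The plan is to reduce the whole statement to one elementary principle: \emph{the pullback of a metric along an injective map is again a metric}. Concretely, if $(M,d_M)$ is a metric space and $f$ is an injective map into $M$, then $(s,t)\mapsto d_M(f(s),f(t))$ is automatically symmetric and satisfies the triangle inequality (both inherited verbatim from $d_M$), is nonnegative, and vanishes exactly when $f(s)=f(t)$, hence — by injectivity — exactly when $s=t$. Both $\dist$ and $d_{KBW}$ have precisely this form, with $f=\Phi$ the RKHS density operator embedding $\mu\mapsto\Sigma_\mu$; note that this route uses only injectivity of $\Phi$, not its linearity. Before applying it I would record that $\Phi$ lands in a well-behaved space: under Assumption~\ref{assum:kernel1}, $\Tr\Sigma_\mu=\int k(x,x)\,d\mu(x)=1$, and $\Sigma_\mu$ is positive, so $\SL{1}\Sigma_\mu\SR{1}=\Tr\Sigma_\mu=1<\infty$ and any difference $\Sigma_\mu-\Sigma_\nu$ is trace-class. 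This guarantees that the Schatten-1 distance and the Bures distance are finite and well defined on the image of $\Phi$.

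Next I would verify the two ``target'' metrics on operators. For $\dist$, the map $(A,B)\mapsto\SL{1}A-B\SR{1}$ is the metric induced by the Schatten-1 norm on trace-class operators; since $\SL{1}\cdot\SR{1}$ is a genuine norm, this is immediate. For $d_{KBW}$, I would invoke that the Bures--Wasserstein distance $d_{BW}$ is a bona fide metric on positive operators of finite trace; in particular its identity of indiscernibles gives $d_{BW}(\Sigma_\mu,\Sigma_\nu)=0\iff\Sigma_\mu=\Sigma_\nu$. This is the one analytically nontrivial input, and I would cite the Bhatia et al. reference for the triangle inequality, noting that the argument extends from the matrix case to finite-trace operators.

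The only remaining ingredient is injectivity of $\Phi$, which connects the hypothesis to the metric property. I would observe that $x\mapsto\varphi(x)\varphi(x)^*$ is a feature map into the Hilbert--Schmidt operators whose reproducing kernel is $\langle\varphi(x)\varphi(x)^*,\varphi(y)\varphi(y)^*\rangle_{HS}=k(x,y)^2$, and that $\Sigma_\mu=\int\varphi(x)\varphi(x)^*\,d\mu(x)$ is exactly the kernel mean embedding of $\mu$ with respect to $k^2$. Hence ``$\Phi$ injective on probability measures'' is by definition ``$k^2$ characteristic,'' which is precisely the assumption. Combining the pullback principle with this injectivity yields identity of indiscernibles for both distances, completing the proof. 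The main obstacle lies not in the short logical skeleton but in the single imported fact — that $d_{BW}$ genuinely obeys the triangle inequality on (possibly infinite-dimensional) finite-trace positive operators; everything else is bookkeeping once finiteness and injectivity are in place.
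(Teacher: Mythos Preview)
Your proposal is correct and follows essentially the same approach as the paper: inherit symmetry, nonnegativity, and the triangle inequality from the underlying operator metrics (the Schatten-1 norm and the Bures--Wasserstein distance), and then use injectivity of $\Phi$ to obtain identity of indiscernibles. Your version is more thorough---you make explicit the trace-class finiteness check, the identification of $\Phi$ with the $k^2$ mean embedding, and the need to cite the triangle inequality for $d_{BW}$ in infinite dimensions---but the logical skeleton is identical to the paper's short proof.
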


\begin{proof}
Symmetry, non-negativity, triangle inequality and 
$\dist(\mu,\mu)=0$ (resp. $d_{KBW}(\mu,\mu)=0$)
are naturally inherited from the Schatten norm on operators for $\dist$ and from the standard Bures-Wasserstein distance for $d_{KBW}$. Then, 
as $\dist(\mu,\nu)=0$ (resp. $d_{KBW}(\mu,\nu)=0$) implies $\Sigma_\mu=\Sigma_\nu$, injectivity of $\Phi$ enforces $\mu=\nu$.
\end{proof}  

Examples of characteristic kernels are the family of Gaussian 
kernels, whose squared kernel also belong to, modulo a change of parameter. On compact set, a sufficient condition for characteristicity is universality~\citep{steinwart2001influence}, see for instance~\citet{bach2022}.
\subsection{Computation for discrete measures}\label{sec:computation}
As interesting, i.e. expressive RKHS are often of infinite dimension, computations with kernel methods relies on the so-called ``kernel trick", reducing computation on the empirical kernel matrix (Gram matrix of two sets of samples using the kernel inner product) which is of finite dimension. It is well-known that the spectrum of the covariance operator $\Sigma_{\mu_n}$ are the ones of the kernel Gram matrix $(k(x_i,x_j))_{i,j=1}^n$divided by the number of samples~\citep[Proposition 6]{bach2022}. Here, we generalise the concept for differences of distributions. 

First, notice that $\Sigma_{\mu_n}-\Sigma_{\nu_m}=\Sigma_{\mu_n-\nu_m}$, which incites us to consider the samples from each distribution altogether.
We denote without duplicates $(z_k)_{k=1,...,r}$ the samples in the union of the sample sets $X,Y$ (corresponding respectively to distributions $\mu_n,\nu_m$), where $r$ is the number of distinct elements in $X,Y$. 
We note $Z = [\Tilde{\varphi}(z_k)]_{k=1...r}$ the column of 
vectors in $\mathcal{H}$ where
$\Tilde{\varphi}(z_k)= \sqrt{(\mu_n-\nu_m)(\{z_k\})}\varphi(z_k)$ if $(\mu_n-\nu_m)(\{z_k\})\geq 0$, $\Tilde{\varphi}(z_k)=
         i\sqrt{|(\mu_n-\nu_m)(\{z_k\})|}\varphi(z_k)$ else. 

We can see $Z$ by a slight abuse of notation as the linear map $Z: \mathcal{H} \to \mathbb{C}^r, v \mapsto [\langle \Tilde{\varphi}(z_1), v\rangle,...,\langle \Tilde{\varphi}(z_r), v\rangle]$ and by duality $Z^*$ (real not Hermitian adjoint) would be the linear map $Z^*:\mathbb{C}^r \to \mathcal{H}, u \mapsto \sum_{i=1,...,r} u_i \Tilde{\varphi}(z_i)$.

Then we define the \emph{difference kernel matrix} as $K = Z^* Z$. Typically, in case where all samples are distinct, $X \cap Y = \emptyset$ and $(\mu_n-\nu_m)(\{z_k\}) = \mu_n(\{z_k\}) = 1/n$ for samples $z_k\in X$ from $\mu_n$ and 
$(\mu_n-\nu_m)(\{z_k\}) = -\nu_m(\{z_k\}) = 1/m$ for samples $z_k\in Y$ from $\nu_m$, then
\begin{equation*}
    K=\left[
\begin{array}{c|c}
\frac{1}{n} K_{XX} & \frac{i}{\sqrt{mn}}K_{XY} \\ \hline
\frac{i}{\sqrt{mn}}K_{YX} & -\frac{1}{m} K_{YY} 
\end{array}\right]
\end{equation*}

where $K_{XX},K_{YY},K_{YX},K_{XY}$ are the usual kernel Gram matrices. Other cases are similar, adjusting the probability weights on rows and columns.

\begin{proposition}\label{prop:computation}
    Assume the kernel is such that for any family $(x)$ of distinct elements of $\mathcal{X}$, $(\varphi(x))$ is linearly independent.
    The difference kernel matrix $K$ as defined just above and $\Sigma_{\mu_n-\nu_m}$ have the same eigenvalues, whose Schatten $1$-norm is $d_{KT}(\mu_n,\nu_m)$.
\end{proposition}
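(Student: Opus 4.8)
The plan is to exhibit $\Sigma_{\mu_n-\nu_m}$ and the matrix $K$ as the two orders of composition of a single analysis/synthesis pair, and then to import the (real) spectrum of the self-adjoint operator onto the finite matrix. Concretely, I would first record the two identities $\Sigma_{\mu_n-\nu_m} = Z^*Z$ (an operator on $\mathcal H$) and $K = ZZ^*$ (an $r\times r$ matrix). The first is checked by expanding $\Sigma_{\mu_n-\nu_m} = \sum_k (\mu_n-\nu_m)(\{z_k\})\,\varphi(z_k)\varphi(z_k)^*$ and comparing with $Z^*Zv = \sum_k \langle\tilde\varphi(z_k),v\rangle\,\tilde\varphi(z_k)$: for a positive weight the two factors $\sqrt{(\mu_n-\nu_m)(\{z_k\})}$ multiply back to the weight, while for a negative weight the two factors of $i$ in $\tilde\varphi(z_k)$ supply the required minus sign. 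This is exactly why the bilinear, ``real not Hermitian'' pairing is used. The second identity is the entrywise computation $K_{jk} = \langle\tilde\varphi(z_j),\tilde\varphi(z_k)\rangle = c_jc_k\,k(z_j,z_k)$, which reproduces the displayed block matrix.

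Next I would transfer the spectrum. Since the range of $Z^*Z$ lies in $\mathcal V := \mathrm{span}(\tilde\varphi(z_1),\dots,\tilde\varphi(z_r))$, the operator $\Sigma_{\mu_n-\nu_m} = Z^*Z$ is finite rank and I may restrict it to the invariant, at most $r$-dimensional subspace $\mathcal V$, reducing everything to finite dimensions. There the classical fact applies: if $Z^*Zx = \lambda x$ with $\lambda\neq 0$ then $ZZ^*(Zx) = \lambda (Zx)$ and $Zx\neq 0$, so $x\mapsto Zx$ matches nonzero eigenvalues with multiplicity (equivalently, via $\det(\lambda I_r - ZZ^*)=\lambda^{\,r-\dim\mathcal V}\det(\lambda I - Z^*Z|_{\mathcal V})$). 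Here the linear independence hypothesis does the real work: writing $K = \mathrm{diag}(c)\,(k(z_j,z_k))_{j,k}\,\mathrm{diag}(c)$, the Gram matrix $(k(z_j,z_k))$ is positive definite as a Gram matrix of independent vectors, and each $c_j^2$ is a nonzero real, so $K$ is invertible. Hence $K$ has no zero eigenvalue, all $r$ of its eigenvalues are genuine nonzero eigenvalues of $\Sigma_{\mu_n-\nu_m}$, and in particular $\Sigma_{\mu_n-\nu_m}$ has rank exactly $r$.

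Finally I would read off the distance. As $\Sigma_{\mu_n-\nu_m} = \Sigma_{\mu_n} - \Sigma_{\nu_m}$ is a difference of self-adjoint operators it is itself self-adjoint, its eigenvalues are real, and its singular values are their absolute values, so $\SL{1}\Sigma_{\mu_n-\nu_m}\SR{1} = \sum_i |\lambda_i|$. By the eigenvalue correspondence these $\lambda_i$ are exactly the eigenvalues of $K$, giving $d_{KT}(\mu_n,\nu_m) = \SL{1}\Sigma_{\mu_n-\nu_m}\SR{1} = \sum_i |\lambda_i(K)|$, a finite computation on the $r\times r$ matrix $K$.

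The step I expect to be most delicate is handling the non-Hermitian pairing cleanly. Because $K$ is complex symmetric but not Hermitian, a priori it need not be diagonalizable nor have real spectrum, and one must resist identifying its singular values with $|\lambda_i(K)|$. The argument sidesteps this entirely: reality of the eigenvalues is never proved for $K$ directly but inherited from the genuinely self-adjoint $\Sigma_{\mu_n-\nu_m}$ through the nonzero-spectrum correspondence, while invertibility of $K$ (from linear independence) rules out any hidden zero eigenvalue or nontrivial Jordan block at $0$. Some care is also needed to set up the complexification and the bilinear pairing so that the matching lemma for $ZZ^*$ and $Z^*Z$ applies verbatim.
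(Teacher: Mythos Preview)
Your argument is correct and essentially the same as the paper's: both realize $\Sigma_{\mu_n-\nu_m}$ and $K$ as the two compositions of a single analysis/synthesis pair and then invoke the $AB$/$BA$ nonzero-spectrum correspondence. The only cosmetic difference is the direction---the paper argues $\mathrm{rank}\,\Sigma_{\mu_n-\nu_m}=r$ from linear independence and pushes the $r$ eigenvectors forward to diagonalize $K$, whereas you factor $K=\mathrm{diag}(c)\,G\,\mathrm{diag}(c)$ to get invertibility of $K$ first and pull the spectrum back---and your explicit remark that reality of $\mathrm{spec}(K)$ is inherited from self-adjointness of $\Sigma_{\mu_n-\nu_m}$ (rather than proved for the complex-symmetric $K$ directly) is a useful clarification the paper leaves implicit.
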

The proof of \Cref{prop:computation} is deferred to \Cref{sec:proof_computation}. The condition is verified by the Gaussian kernel and more generally it is equivalent to the kernel being strictly positive.
It is sufficient to get the eigenvalues by either Autonne-Takagi factorisation~\citep{autonne1915matrices,takagi1924algebraic}, Schur or Singular Value decomposition, and compute their 1-norm.
This SVD
is of complexity $O(r^3)$ in general. 

\section{Discriminative properties}\label{sec:motivation}
In this section, we study the discriminative properties of the $d_{KT}$ distance and how it relates to alternative distances between distributions introduced previously. 

\subsection{Comparison with other distances}\label{subsec:firstprop}
We first show that our novel distance $\dist$ belongs to the family of Integral Probability Metrics (IPM). 

\begin{proposition}\label{thm:IPM}
\item[] \vspace{-0.2cm}
\begin{enumerate}
    \item $\dist$ is an $\IPM$ with respect to the function space 
    \small
    $\mathcal{F}_1 = \{f:~x~\mapsto~\varphi(x)^*U\varphi(x) | U\in\mathcal{L}(\mathcal{H}),
    ||U||_\infty = 1  \}$. \label{item:1}
    \normalsize
    \item[] Moreover if Assumption~\ref{assum:kernel1} is verified:
    \item functions in $\mathcal{F}_1$ have values in $[-1,1]$, and \label{item:2}
    \item verify the following ``Lipschitz" property: 
    $     \forall x,y \in \mathcal{X},\; |f(x)-f(y)| \leq 2||\varphi(x) - \varphi(y) ||_\mathcal{H}.$
 \label{item:3}
\end{enumerate}
\end{proposition}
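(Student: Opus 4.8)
The plan is to treat the three claims in order, with the first as the main content and the last two as short corollaries of Assumption~\ref{assum:kernel1}. For the IPM identity in part~\ref{item:1}, I would begin from the dual representation of the Schatten $1$-norm, \eqref{eq:schattendual} with $p=1$ and conjugate exponent $q=\infty$, so that $\dist(\mu,\nu)=\SL{1}\Sigma_\mu-\Sigma_\nu\SR{1}=\sup_{\SL{\infty}U\SR{\infty}=1}\Tr\bigl(U^*(\Sigma_\mu-\Sigma_\nu)\bigr)$. By linearity of $\Phi$, $\Sigma_\mu-\Sigma_\nu=\Sigma_{\mu-\nu}=\int_\mathcal{X}\varphi(x)\varphi(x)^*\,\mathrm d(\mu-\nu)(x)$, and pulling the trace inside the integral reduces the pairing to a pointwise quantity via the rank-one identity $\Tr(U^* vv^*)=v^*U^*v=\langle v,U^*v\rangle$. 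With $v=\varphi(x)$ this gives $\Tr(U^*\Sigma_{\mu-\nu})=\int_\mathcal{X}\varphi(x)^*U^*\varphi(x)\,\mathrm d(\mu-\nu)(x)=\mathbb{E}_\mu[g_U]-\mathbb{E}_\nu[g_U]$ with $g_U(x)=\varphi(x)^*U^*\varphi(x)$.

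To recognise $\mathcal{F}_1$, I would then use two elementary facts about the operator-norm unit ball: it is invariant under $U\mapsto -U$, which lets me insert an absolute value and rewrite the supremum as $\sup|\mathbb{E}_\mu[g_U]-\mathbb{E}_\nu[g_U]|$; and the adjoint map $U\mapsto U^*$ is a norm-preserving bijection of that ball, so that as $U$ varies the functions $g_U$ sweep out exactly $\{x\mapsto\varphi(x)^*V\varphi(x):\SL{\infty}V\SR{\infty}=1\}=\mathcal{F}_1$. Combining these yields $\dist(\mu,\nu)=\sup_{f\in\mathcal{F}_1}|\mathbb{E}_\mu[f]-\mathbb{E}_\nu[f]|$, the desired IPM form.

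Part~\ref{item:2} is then immediate: for $f(x)=\langle\varphi(x),U\varphi(x)\rangle$ with $\SL{\infty}U\SR{\infty}=1$, Cauchy--Schwarz and the operator-norm bound give $|f(x)|\le\Vert\varphi(x)\Vert_\mathcal{H}\,\Vert U\varphi(x)\Vert_\mathcal{H}\le\SL{\infty}U\SR{\infty}\,\Vert\varphi(x)\Vert_\mathcal{H}^2=k(x,x)=1$ under Assumption~\ref{assum:kernel1}. For part~\ref{item:3}, writing $a=\varphi(x)$ and $b=\varphi(y)$, I would use the telescoping decomposition $\langle a,Ua\rangle-\langle b,Ub\rangle=\langle a,U(a-b)\rangle+\langle a-b,Ub\rangle$ and bound each term by Cauchy--Schwarz together with $\SL{\infty}U\SR{\infty}=1$, obtaining $|f(x)-f(y)|\le(\Vert a\Vert_\mathcal{H}+\Vert b\Vert_\mathcal{H})\Vert a-b\Vert_\mathcal{H}=2\Vert\varphi(x)-\varphi(y)\Vert_\mathcal{H}$, since both feature vectors are unit vectors under Assumption~\ref{assum:kernel1}.

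The one step requiring care is the trace/integral interchange in part~\ref{item:1}. I would justify it using the standing boundedness and continuity of the kernel (Assumption~0), which make $x\mapsto\varphi(x)\varphi(x)^*$ a bounded, Bochner-integrable map into the trace class, so that $\Sigma_{\mu-\nu}$ is a well-defined trace-class operator and the continuous linear functional $\Tr(U^*\,\cdot\,)$ commutes with the integral. All the remaining manipulations are direct, so this measurability/integrability bookkeeping is where I expect the only genuine subtlety to lie.
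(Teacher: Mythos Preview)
Your proposal is correct and follows essentially the same route as the paper: the dual Schatten representation \eqref{eq:schattendual} together with the rank-one trace identity for part~\ref{item:1}, and Cauchy--Schwarz plus the operator-norm bound and Assumption~\ref{assum:kernel1} for parts~\ref{item:2} and~\ref{item:3}. Your telescoping in~\ref{item:3} groups the terms slightly differently from the paper's but is equivalent, and your explicit Bochner-integrability remark for the trace/integral interchange is extra care the paper leaves implicit.
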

The proof of \Cref{thm:IPM} is deferred to \Cref{sec:proof_IPM}. Since the TV distance is an IPM with respect to functions bounded by 1, we have the following corollary:
\begin{corollary}
$\dist(\mu,\nu) \leq || \mu - \nu||_{TV}$.
\end{corollary}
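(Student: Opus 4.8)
The plan is to obtain the inequality essentially for free from the IPM representations of the two distances, so that the whole argument reduces to a containment of function classes followed by monotonicity of the supremum; no new estimate on the density operators is required. This is the natural route because \Cref{thm:IPM} has already done all the analytic work.

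First I would write out the dual form of $\dist$ supplied by item~(i) of \Cref{thm:IPM}, namely $\dist(\mu,\nu)=\sup_{f\in\mathcal{F}_1}\left|\mathbb{E}_{X\sim\mu}[f(X)]-\mathbb{E}_{X\sim\nu}[f(X)]\right|$ with $\mathcal{F}_1=\{x\mapsto\varphi(x)^*U\varphi(x):\|U\|_\infty=1\}$. I would then invoke item~(ii): provided \Cref{assum:kernel1} holds, every $f\in\mathcal{F}_1$ is bounded with $\|f\|_\infty\le 1$, so that $\mathcal{F}_1$ is contained in the class $\mathcal{F}_{TV}=\{f:\mathcal{X}\to[-1,1]\}$ whose associated IPM is the total variation distance. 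Along the way I would note that each $f\in\mathcal{F}_1$ is continuous (hence measurable), since $k$ is continuous under Assumption~0, so both expectations are well defined and the dual representation is legitimate.

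The conclusion is then pure monotonicity of the supremum: restricting to the smaller feasible set $\mathcal{F}_1\subseteq\mathcal{F}_{TV}$ can only decrease the value, whence $\dist(\mu,\nu)=\sup_{f\in\mathcal{F}_1}|\cdots|\le\sup_{f\in\mathcal{F}_{TV}}|\cdots|=\|\mu-\nu\|_{TV}$. As a sanity check, a purely operator-theoretic route gives the same bound: by the triangle inequality for the integral and sub-multiplicativity one has $\dist(\mu,\nu)=\SL{1}\int\varphi(x)\varphi(x)^*\,d(\mu-\nu)(x)\SR{1}\le\int\SL{1}\varphi(x)\varphi(x)^*\SR{1}\,d|\mu-\nu|(x)=\int k(x,x)\,d|\mu-\nu|(x)=|\mu-\nu|(\mathcal{X})$, again using \Cref{assum:kernel1}.

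I do not expect a genuine obstacle, as the statement is an immediate corollary of \Cref{thm:IPM}. The one point that truly deserves care is the bookkeeping of constants: the comparison via the $[-1,1]$-valued IPM yields the total mass $|\mu-\nu|(\mathcal{X})=\int|\mu-\nu|$, so one must make sure the normalization of the total variation used in the bound is the one coming from ``TV is an IPM with respect to functions bounded by $1$'' rather than the $\tfrac12\int|\mu-\nu|$ form displayed in the definition list; these two conventions differ by a factor of $2$, and the final constant in front of $\|\mu-\nu\|_{TV}$ should be reconciled to be exactly $1$. It is also worth stressing that \Cref{assum:kernel1} is essential here: without the normalization $k(x,x)=1$ the functions in $\mathcal{F}_1$ need not lie in $[-1,1]$, the inclusion $\mathcal{F}_1\subseteq\mathcal{F}_{TV}$ would fail, and the comparison would no longer hold.
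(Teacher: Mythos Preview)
Your approach is correct and is exactly the one the paper intends: the corollary is stated immediately after \Cref{thm:IPM} with the sentence ``Since the TV distance is an IPM with respect to functions bounded by 1, we have the following corollary,'' i.e.\ the inclusion $\mathcal{F}_1\subseteq\{f:\|f\|_\infty\le 1\}$ from item~(ii) followed by monotonicity of the supremum. Your flag about the factor $2$ is well taken---the paper's displayed definition $\|\mu-\nu\|_{TV}=\tfrac12\int|\mu-\nu|$ is not literally the IPM over $[-1,1]$-valued functions, so the constant in the corollary depends on which convention is actually meant; this is an inconsistency in the paper rather than in your argument.
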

We also have a direct comparison between $d_{KT}$ and a MMD. 
\begin{lemma}\label{prop:MMDschatten2}
The Schatten 2-norm of the difference of the RKHS density operators of two probability distributions $\mu,\nu$ on $\mathcal{X}$ can be identified to their Maximum Mean Discrepancy using the kernel $k^2$:
\begin{equation*}
     \STL\Sigma_\mu-\Sigma_\nu\STR = \MMD_{k^2}(\mu,\nu)   
\end{equation*}
Consequently, since $d_{KT}$ is a Schatten 1-norm of this difference, $\MMD_{k^2}(\mu,\nu) \leq \dist(\mu,\nu)$.
\end{lemma}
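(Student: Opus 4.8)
The plan is to prove the equality $\STL\Sigma_\mu-\Sigma_\nu\STR = \MMD_{k^2}(\mu,\nu)$ by expanding the squared Hilbert--Schmidt norm into a double integral and recognising it as a squared MMD with the product kernel $k^2$; the inequality $\MMD_{k^2}(\mu,\nu)\leq\dist(\mu,\nu)$ will then follow immediately from the Schatten norm ordering.

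The key computation I would carry out first is the Hilbert--Schmidt inner product of two rank-one density operators. Since $\varphi(x)^*\varphi(y)=k(x,y)$ is a scalar, one has $\varphi(x)\varphi(x)^*\varphi(y)\varphi(y)^*=k(x,y)\,\varphi(x)\varphi(y)^*$, and taking the trace gives $\langle \varphi(x)\varphi(x)^*,\varphi(y)\varphi(y)^*\rangle=\Tr(\varphi(x)\varphi(x)^*\varphi(y)\varphi(y)^*)=k(x,y)^2$. This says precisely that the Hilbert--Schmidt-valued map $x\mapsto\varphi(x)\varphi(x)^*$ is a feature map for $k^2$, so that $\Sigma_\mu$ is exactly the kernel mean embedding of $\mu$ in the RKHS associated with $k^2$.

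Next I would expand $\STL\Sigma_\mu-\Sigma_\nu\STR^2=\langle\Sigma_\mu-\Sigma_\nu,\Sigma_\mu-\Sigma_\nu\rangle$ by bilinearity and the integral definition of $\Sigma$. Pulling the inner product inside the integrals turns this into $\iint k(x,y)^2\,\mathrm d(\mu-\nu)(x)\,\mathrm d(\mu-\nu)(y)$, which is exactly $\MMD_{k^2}(\mu,\nu)^2$ via the standard expansion of a squared MMD as the double integral of the kernel against $(\mu-\nu)\otimes(\mu-\nu)$; taking square roots gives the identity. The main technical obstacle is justifying the interchange of the inner product with the integrals. I would handle this using Assumptions~0 and~\ref{assum:kernel1}: a bounded continuous kernel with $k(x,x)=1$ makes $x\mapsto\varphi(x)\varphi(x)^*$ a uniformly bounded, Bochner-integrable map into the separable Hilbert space of Hilbert--Schmidt operators, so $\Sigma_\mu,\Sigma_\nu$ are well-defined, the inner product commutes with the Bochner integrals, and the resulting double integral converges absolutely (validating the Fubini step).

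Finally, the ``consequently'' statement is a one-line consequence of the Schatten inequality $\SL{1}T\SR{1}\geq\STL T\STR$ recalled earlier: applying it to $T=\Sigma_\mu-\Sigma_\nu$ yields $\MMD_{k^2}(\mu,\nu)=\STL\Sigma_\mu-\Sigma_\nu\STR\leq\SL{1}\Sigma_\mu-\Sigma_\nu\SR{1}=\dist(\mu,\nu)$.
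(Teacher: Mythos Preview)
Your proposal is correct and follows essentially the same route as the paper: compute $\langle\Sigma_\mu,\Sigma_\nu\rangle$ by pushing the trace/inner product through the integrals to get $\iint k(x,y)^2\,\mathrm d\mu(x)\,\mathrm d\nu(y)$, expand $\STL\Sigma_\mu-\Sigma_\nu\STR^2$ by bilinearity, and identify the result as $\MMD_{k^2}^2$; the inequality is then the Schatten ordering $\SL{1}\cdot\SR{1}\ge\STL\cdot\STR$. If anything you are more careful than the paper, which does not spell out the Bochner/Fubini justification you mention.
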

This follows mainly from the fact that $\langle \Sigma_\mu, \Sigma_\nu\rangle = 
\int_\mathcal{X} \int_\mathcal{Y} k(x,y)k(x,y) \mu(x)  \nu(y)dx dy$ (see  Appendix~\ref{appendsssec:firstprop}). 
Finally, we can relate $d_{KT}$ to some Wasserstein distance. 
Denoting $c_k(x,y) =  ||\varphi(x) - \varphi(y) ||_\mathcal{H} = \sqrt{2(1-k(x,y))}$ a cost defined from the kernel $k$, 
and applying
the Lipschitz property of Theorem~\ref{thm:IPM}, we get the following:

\begin{corollary}\label{cor:wasserineq}
If Assumption~\ref{assum:kernel1} is verified,
   $ \dist(\mu,\nu) \leq 2 W_{c_k}(\mu,\nu).$
Furthermore, using the Gaussian kernel with parameter $\sigma$,
\begin{align*}
    \dist(\mu,\nu) \leq 2 W_{c_k}(\mu,\nu)\leq \frac{2}{\sigma} W_{||.||}(\mu,\nu). 
\end{align*}
\end{corollary}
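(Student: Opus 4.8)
The plan is to package \Cref{thm:IPM}: it gives the IPM representation $\dist(\mu,\nu)=\sup_{f\in\mathcal{F}_1}\left|\int f\,d\mu-\int f\,d\nu\right|$, and, under \Cref{assum:kernel1}, its third item gives $|f(x)-f(y)|\le 2\|\varphi(x)-\varphi(y)\|_{\mathcal{H}}=2\,c_k(x,y)$ for every $f\in\mathcal{F}_1$. In words, every dual witness of $\dist$ is $2$-Lipschitz with respect to the ground cost $c_k$, which is exactly what is needed to dominate $\dist$ by the transport cost $W_{c_k}$.

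First I would establish the first inequality by a one-sided Kantorovich bound, so that no appeal to full Kantorovich--Rubinstein duality (and hence no metricity assumption on $c_k$) is required. Fix $f\in\mathcal{F}_1$ and an arbitrary coupling $\pi\in\Pi(\mu,\nu)$. Since the two marginals of $\pi$ are $\mu$ and $\nu$, I may write $\int f\,d\mu-\int f\,d\nu=\int\bigl(f(x)-f(y)\bigr)\,d\pi(x,y)$, whence by the Lipschitz property $\left|\int f\,d(\mu-\nu)\right|\le\int|f(x)-f(y)|\,d\pi\le 2\int c_k(x,y)\,d\pi$. Taking the infimum over $\pi$ gives $\left|\int f\,d(\mu-\nu)\right|\le 2\,W_{c_k}(\mu,\nu)$, and taking the supremum over $f\in\mathcal{F}_1$ yields $\dist(\mu,\nu)\le 2\,W_{c_k}(\mu,\nu)$.

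Next, for the Gaussian kernel $k(x,y)=\exp\bigl(-\|x-y\|^2/(2\sigma^2)\bigr)$, I would compare the two ground costs pointwise. Setting $u=\|x-y\|^2/(2\sigma^2)\ge 0$, the claim $c_k(x,y)\le\|x-y\|/\sigma$ becomes, after squaring both nonnegative sides, $2(1-e^{-u})\le 2u$, i.e. $1-e^{-u}\le u$, which is the elementary inequality $e^{-u}\ge 1-u$ valid for all $u\in\mathbb{R}$. Thus $c_k(x,y)\le\frac1\sigma\|x-y\|$ everywhere, and by monotonicity of the transport cost in the cost function — evaluating $\int c_k\,d\pi^\star\le\frac1\sigma\int\|x-y\|\,d\pi^\star$ at the optimal coupling $\pi^\star$ for $W_{\|\cdot\|}$ — I get $W_{c_k}(\mu,\nu)\le\frac1\sigma W_{\|\cdot\|}(\mu,\nu)$, which chains with the first inequality to give the displayed result.

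I do not expect a serious obstacle, since the statement is essentially a repackaging of \Cref{thm:IPM}; the only steps needing care are (a) the identity $\int f\,d(\mu-\nu)=\int(f(x)-f(y))\,d\pi$, legitimate precisely because both marginals are used at once, and (b) getting the direction of the pointwise cost comparison right before invoking monotonicity of $W$. A minor subtlety worth flagging is that $c_k=\|\varphi(\cdot)-\varphi(\cdot)\|_{\mathcal{H}}$ is in general only a pseudometric (a genuine metric when $\varphi$ is injective), but this is immaterial here because the argument uses only the easy one-sided bound rather than strong duality.
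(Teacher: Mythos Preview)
Your proposal is correct and follows essentially the same approach as the paper: both use the IPM representation and the Lipschitz property from \Cref{thm:IPM} to bound the integrand against any coupling, then take the infimum over couplings and supremum over $f\in\mathcal{F}_1$; for the Gaussian case both reduce the pointwise cost comparison to $1-e^{-u}\le u$. Your write-up is slightly more careful about the order of the inf/sup and about why no full Kantorovich--Rubinstein duality is needed, but the underlying argument is identical.
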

The last remark is due to the fact that the Wasserstein-1 distance is an IPM defined by the functions which are 1-Lipschitz w.r.t. $\|\cdot\|$, 
and for the Gaussian kernel $k(x,y) = e^{-\frac{||x-y||^2}{2\sigma^2}}$, we have $
    c_k(x,y)
    \leq \frac{||x-y||}{\sigma}$. See Appendix~\ref{appendsssec:firstprop} for full proof.


Finally our novel distance can be related to other kernelized quantum divergences. Some well-known inequality in quantum information theory relating the trace distance and the fidelity is the following 
\citet{FuchsDeGraaf} inequality : 
\begin{align}\label{ineq:FvdG}
    2(1 - F(A,B) )\leq ||A-B||_1 \leq 2\sqrt{1 - F(A,B)^2}
\end{align}
which translates as upper and lower bounds on $\dist$ with respect to $d_{KBW}$ (see proof in Appendix~\ref{appendsssec:firstprop} using Assumption~\ref{assum:kernel1}):
\begin{align}\label{ineq:d1kbw}
    d_{KBW}(\mu,\nu)^2 \leq \dist(\mu,\nu) 
    \leq 2 d_{KBW}(\mu,\nu)
\end{align}
Let $D_{\mathrm{KL}}(A|B)=\Tr(A(\log A-\log B))$ the quantum relative entropy. The Kernel-Kullback-Leibler ($\mathrm{KKL}$) divergence introduced in \citet{bach2022} is defined as the latter applied to the density operators of two distributions $\mu,\nu$ on $\mathcal{X}$ (in particular, it is infinite if $\mu$ is not absolutely continuous w.r.t. $\nu$). Thanks to the (quantum) Pinsker's inequality, we have then:
   $\frac{1}{2}\dist(\mu,\nu)^2 \leq D_{KL}(\Sigma_\mu | \Sigma_\nu):=\mathrm{KKL}(\mu|\nu).$
Hence, our distance can be framed within several well-known alternative discrepancies.


\subsection{Normalized energy}
\label{subsec:mmdpitfalls}

\begin{wrapfigure}{r}{0.4\textwidth}
\includegraphics[width=0.85\linewidth]{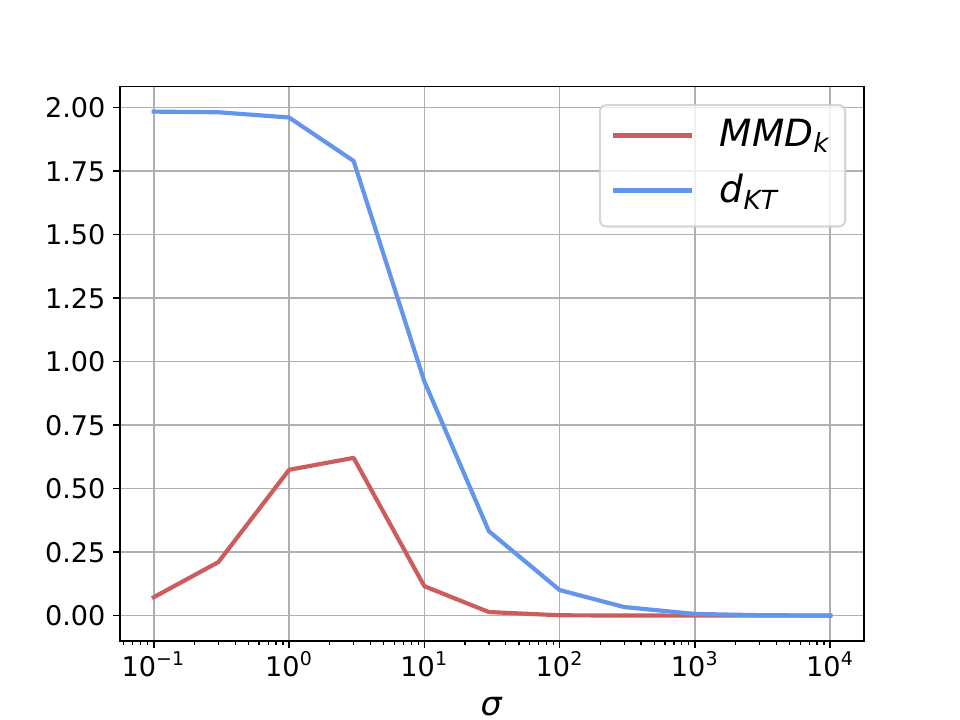}
         \caption{Kernel distances between $\mu=\mathcal{N}(0,1)$ and $\nu=\mathcal{N}(5,1)$, as a function of the Gaussian kernel bandwidth $\sigma$.  
       \label{fig:varkernel}
}
\end{wrapfigure}From our Assumption~\ref{assum:kernel1} on the kernel, we have ensured that for any measure $\mu$, $\SL{1} \Sigma_\mu\SR{1} = 1$ which means that all measures representations considered are somehow ``normalised". On the contrary, for MMD with $k^2$ (or the Schatten 2-norm), $\SL{2} \Sigma_\mu\SR{2}$ the ``internal energy" depends on the measure (and on the kernel parameters such as bandwidth) and it can be smaller for distributions which are very flat, with high variance, as in general $k(x,y)\leq k(x,x)$ for $x\neq y$. This has consequences as intrinsically 
$\SL{2} \Sigma_\mu - \Sigma_\nu \SR{2} \leq \sqrt{\SL{2} \Sigma_\mu\SR{2}^2 + \SL{2} \Sigma_\nu\SR{2}^2}$
, the maximum value can be already small independently of the differences between $\mu$ and $\nu$. When minimizing  an objective such as $\mu\mapsto \SL{2} \Sigma_\mu - \Sigma_\nu \SR{2}$ (e.g., with gradient descent on the atoms in the support of $\mu$ if it is a discrete measure, as in \cite{arbel2019maximum}), this has an impact on the shape of the slope.
Moreover, the energy depends on the hyperparameters of the kernel, which are hard to tune for both the distributions' variances and the distance between their means at the same time. 

\Cref{fig:varkernel} illustrates this by displaying the two distances between sets of $n=1000$ samples from $\mathcal{N}(0,1)$ and $\mathcal{N}(5,1)$. We would expect sample sets to look closer as the Gaussian kernel bandwidth $\sigma$ grows, but for MMD that is not always the case. Other such phenomena are displayed by varying the variance or the mean of the distributions in the Appendix~\ref{appendssec:XPmmdpitfalls}.

Now let us consider 
two measures $\mu,\nu$ on $\mathcal{X}$ 
such that $\mathbb{E}_{X \sim \mu,Y \sim \nu} [k(X,Y)] \leq \epsilon$ for some small parameter $\epsilon>0$. 
Then, $\langle \Sigma_\mu ,\Sigma_\nu \rangle \leq \epsilon$ by Cauchy-Schwartz. 
Consider the density operator of the mixture $\Sigma_{\frac{1}{2}\mu+\frac{1}{2}\nu} = \frac{1}{2}\Sigma_{\mu}+ \frac{1}{2}\Sigma_{\nu}$, we have:
    \begin{align*}
     ||\Sigma_{\frac{1}{2}\mu+\frac{1}{2}\nu}||_1 = 1 = \frac{1}{2}||\Sigma_\mu||_1+\frac{1}{2}||\Sigma_\nu||_1, \qquad
   ||\Sigma_{\frac{1}{2}\mu+\frac{1}{2}\nu}||_2^2 \leq \frac{1}{2}\left(\frac{1}{2}||\Sigma_{\mu} ||_2^2 + \frac{1}{2}||\Sigma_{\nu} ||_2^2 +\epsilon\right).
    \end{align*}
We see that in contrast to the 1-Schatten norm, the 2-Schatten norm energy bound is roughly divided by 2 (as $\epsilon \to 0$, e.g. for almost orthogonals $\Sigma_{\mu},\Sigma_{\nu}$). 
Then, we reason with distance rather than norm:
\begin{proposition}\label{prop:modes}
    Let us consider distances between two mixtures $P=\frac{1}{2}\mu_1+\frac{1}{2}\mu_2$ and $Q=\frac{1}{2}\nu_1+\frac{1}{2}\nu_2$ such that $\Sigma_{\mu_1},\Sigma_{\nu_1}$ are orthogonal to $\Sigma_{\mu_2},\Sigma_{\nu_2}$. Then:
    \begin{align*}
        &\dist(P,Q) = \frac{1}{2}\dist(\mu_1,\nu_1) + \frac{1}{2}\dist(\mu_2,\nu_2)\\
        &\MMD_{k^2}^2(P,Q) = \frac{1}{4}\MMD_{k^2}^2(\mu_1,\nu_1) + \frac{1}{4}\MMD_{k^2}^2(\mu_2,\nu_2) .
    \end{align*}
    \normalsize
\end{proposition}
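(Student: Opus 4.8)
The plan is to reduce both identities to the block-diagonal structure induced by the orthogonality hypothesis. By linearity of the embedding $\Phi$ (Equation~\eqref{eq:densdef}) one has $\Sigma_P = \tfrac{1}{2}\Sigma_{\mu_1} + \tfrac{1}{2}\Sigma_{\mu_2}$ and $\Sigma_Q = \tfrac{1}{2}\Sigma_{\nu_1} + \tfrac{1}{2}\Sigma_{\nu_2}$, so that
\begin{equation*}
\Sigma_P - \Sigma_Q = \tfrac{1}{2}A + \tfrac{1}{2}B, \qquad A := \Sigma_{\mu_1} - \Sigma_{\nu_1}, \quad B := \Sigma_{\mu_2} - \Sigma_{\nu_2}.
\end{equation*}
I read the orthogonality hypothesis in the quantum-information sense: the supports (closed ranges) in $\mathcal{H}$ of $\Sigma_{\mu_1},\Sigma_{\nu_1}$ are orthogonal to those of $\Sigma_{\mu_2},\Sigma_{\nu_2}$. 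Since each $\Sigma$ is self-adjoint and positive semidefinite, it vanishes on the orthogonal complement of its range; hence $A$ is supported on a subspace $V_1$ and $B$ on a subspace $V_2$ with $V_1 \perp V_2$.

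First I would make precise that, with respect to the orthogonal decomposition $\mathcal{H} = V_1 \oplus V_2 \oplus (V_1 \oplus V_2)^\perp$, the operator $\tfrac{1}{2}A + \tfrac{1}{2}B$ is block diagonal, acting as $\tfrac{1}{2}A$ on $V_1$, as $\tfrac{1}{2}B$ on $V_2$, and as $0$ on the remaining block. The singular values of such a direct sum are the union (with multiplicities) of the singular values of the blocks, which yields for every $p\in[1,\infty)$ the additivity
\begin{equation*}
\SL{p}\tfrac{1}{2}A + \tfrac{1}{2}B\SR{p}^p = \SL{p}\tfrac{1}{2}A\SR{p}^p + \SL{p}\tfrac{1}{2}B\SR{p}^p = \tfrac{1}{2^p}\bigl(\SL{p}A\SR{p}^p + \SL{p}B\SR{p}^p\bigr).
\end{equation*}

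Both claims then follow by specialising $p$. For $p=1$ this gives $\SL{1}\Sigma_P - \Sigma_Q\SR{1} = \tfrac{1}{2}\SL{1}A\SR{1} + \tfrac{1}{2}\SL{1}B\SR{1}$, i.e.\ the stated identity for $\dist$ after unfolding $\dist(\mu,\nu) = \SL{1}\Sigma_\mu - \Sigma_\nu\SR{1}$. For $p=2$ it gives $\STL\Sigma_P - \Sigma_Q\STR^2 = \tfrac{1}{4}\STL A\STR^2 + \tfrac{1}{4}\STL B\STR^2$; invoking \Cref{prop:MMDschatten2} to rewrite each Hilbert--Schmidt norm as the corresponding $\MMD_{k^2}$ delivers the second identity. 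Alternatively, the $p=2$ case can be obtained directly as a Pythagorean identity, since orthogonal supports force $AB = 0$ and hence $\langle A, B\rangle = \Tr(A^* B) = 0$, making $A$ and $B$ orthogonal for the Hilbert--Schmidt inner product.

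The main obstacle is the rigorous justification of the Schatten $1$-norm additivity. Unlike the Hilbert--Schmidt case, it genuinely requires the two operators to be supported on orthogonal subspaces (orthogonality of both their ranges and the orthogonal complements of their kernels), not merely that they be Hilbert--Schmidt orthogonal. This is precisely what support-orthogonality of the self-adjoint operators $A$ and $B$ guarantees, which is why the quantum notion of orthogonal states (zero operator product) is the correct reading of the hypothesis. Some care is also needed in the infinite-dimensional setting to work with closed ranges and to confirm that $A$ and $B$ are trace-class---which holds since $\Tr \Sigma_\mu = 1$ under Assumption~\ref{assum:kernel1}---so that the decomposition of singular values over the blocks is valid.
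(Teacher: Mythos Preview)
Your proof is correct and arrives at the same conclusion via a somewhat different route from the paper. You work on the \emph{primal} side: from support-orthogonality of the self-adjoint operators $A$ and $B$ you obtain a block-diagonal decomposition of $\tfrac12 A+\tfrac12 B$, read off that the singular values split as a disjoint union over the blocks, and deduce additivity of $\SL{p}\cdot\SR{p}^p$ for every $p$, then specialise to $p=1,2$. The paper instead works on the \emph{dual} side for $p=1$: it invokes the variational characterisation $\SL{1}T\SR{1}=\sup_{\|U\|_\infty=1}\langle U,T\rangle$, decomposes the witness $U=U_1+U_2$ along the two orthogonal supports, and uses $\|U_1+U_2\|_\infty=\max(\|U_1\|_\infty,\|U_2\|_\infty)$ to decouple the supremum; for $p=2$ it uses the Pythagorean identity exactly as in your alternative argument. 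Your approach is slightly more self-contained (no appeal to the dual formula) and yields the general-$p$ statement at once; the paper's argument highlights the IPM/dual viewpoint that runs through the rest of the article. Your explicit remark that Hilbert--Schmidt orthogonality of positive operators forces $AB=0$, hence support-orthogonality, is a useful clarification that the paper leaves implicit.
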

See proof in the Appendix~\ref{appendsssec:mmdpitfalls}.
If the distance between $\mu_2$ and $\nu_2$ are the same as between $\mu_1$ and $\nu_1$ (for instance, if the former are respective translation of the latter and the kernel is translation-invariant), we can see that the squared MMD distance loses a factor 2 while $\dist$ behaves similarly to the Total Variation of the mixtures when $\mu_1,\nu_1$ have different supports than $\mu_2,\nu_2$. 
This is the case when taking for instance in $\mathcal{X}=\mathbb{R}^2$ $\mu_1 = \mathcal{N}([0,0],I_2)$ and $\nu_1 = \mathcal{N}([0.3,0.3],I_2)$ while $\mu_2 = \mathcal{N}(\Delta,I_2)$ and $\nu_2 = \mathcal{N}(\Delta+[0.3,0.3],I_2)$ for $\Delta = [10,10]$.
In practice, the RKHS density operators are not perfectly orthogonal unless $||\Delta|| \to +\infty$ (in that case $\langle \Sigma_{\mu},\Sigma_{\nu}\rangle\to 0$ for a fixed bandwidth), but typically they can look so up to numerical precision, when using exponentially decreasing kernels (e.g., Gaussian). 
Taking $n=100$ samples each from each $\mu_1$ and $\nu_1$, and translating them by $\Delta$, the results above from Proposition~\ref{prop:modes} are confirmed numerically:
we find empirically $\widehat{\dist}(P,Q) = \widehat{\dist}(\mu_1,\nu_1) = 0.5992$ while $\widehat{\MMD_{k^2}^2}(\mu_1,\nu_1) = 0.0253$ but $\widehat{\MMD_{k^2}^2}(P,Q)=0.0127$, half of it (for a Gaussian kernel with bandwidth $\sigma=0.5$).

\subsection{Robustness}\label{subsec:robust}

We now turn to investigating the robustness 
of the kernel trace distance. 
In particular, we consider the $\epsilon$-contamination model, where
the training dataset is supposedly contaminated by a fraction $\epsilon \in (0, 1)$ of outliers \citep{huber1964robust}. The following proposition quantifies the robustness of this distance. 
    \begin{proposition}\label{prop:robust}
        Denote $P_\varepsilon = (1-\varepsilon) P + \varepsilon C$ where $C$ is some contamination distribution.
        We have when Assumption~\ref{assum:kernel1} is verified: $|\dist(P_\varepsilon,Q)-\dist(P,Q)| \leq 2 \varepsilon$.
    \end{proposition}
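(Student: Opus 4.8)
The plan is to exploit two facts: the linearity of the covariance embedding $\Phi$ in its measure argument, and the reverse triangle inequality for the Schatten $1$-norm. First I would note that the map $\mu \mapsto \Sigma_\mu = \int_\mathcal{X} \varphi(x)\varphi(x)^* d\mu(x)$ is linear, so that $\Sigma_{P_\varepsilon} = (1-\varepsilon)\Sigma_P + \varepsilon\Sigma_C$ and therefore $\Sigma_{P_\varepsilon} - \Sigma_P = \varepsilon(\Sigma_C - \Sigma_P)$.

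Next, writing $\dist(P_\varepsilon,Q) = \SL{1}\Sigma_{P_\varepsilon} - \Sigma_Q\SR{1}$ and $\dist(P,Q) = \SL{1}\Sigma_P - \Sigma_Q\SR{1}$, I would apply the reverse triangle inequality (a direct consequence of subadditivity of the norm) to the two operators $\Sigma_{P_\varepsilon} - \Sigma_Q$ and $\Sigma_P - \Sigma_Q$, whose difference is exactly $\Sigma_{P_\varepsilon} - \Sigma_P$. This yields $|\dist(P_\varepsilon,Q) - \dist(P,Q)| \leq \SL{1}\Sigma_{P_\varepsilon} - \Sigma_P\SR{1} = \varepsilon\,\SL{1}\Sigma_C - \Sigma_P\SR{1} = \varepsilon\,\dist(C,P)$.

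It then remains to bound $\dist(C,P)\leq 2$. Here I would invoke Assumption~\ref{assum:kernel1}: since $\Sigma_\mu$ is positive semidefinite for any probability measure $\mu$, its Schatten $1$-norm equals its trace, which is $\int_\mathcal{X} k(x,x)\,d\mu(x) = 1$ under Assumption~\ref{assum:kernel1}. By subadditivity of $\SL{1}\cdot\SR{1}$ we then get $\SL{1}\Sigma_C - \Sigma_P\SR{1} \leq \SL{1}\Sigma_C\SR{1} + \SL{1}\Sigma_P\SR{1} = 2$, so $\varepsilon\,\dist(C,P)\leq 2\varepsilon$, which closes the argument.

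There is essentially no hard step here: the whole estimate rests on the linearity of $\Phi$ combined with norm (sub)additivity. The only point deserving genuine care is the identity $\SL{1}\Sigma_\mu\SR{1} = \Tr\Sigma_\mu = 1$, which is precisely where both the positive semidefiniteness of $\Sigma_\mu$ and the normalization $k(x,x)=1$ of Assumption~\ref{assum:kernel1} are used; absent that normalization the bound would read $\varepsilon(\Tr\Sigma_C + \Tr\Sigma_P)$ and the constant $2$ would change correspondingly.
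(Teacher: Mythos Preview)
Your proof is correct and follows essentially the same approach as the paper: both use the reverse triangle inequality to reduce to $\dist(P,P_\varepsilon)=\varepsilon\,\SL{1}\Sigma_P-\Sigma_C\SR{1}$ and then bound this by $2\varepsilon$ via Assumption~\ref{assum:kernel1}. Your write-up is in fact slightly more explicit than the paper's in justifying why $\SL{1}\Sigma_\mu\SR{1}=\Tr\Sigma_\mu=1$.
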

    The proof relies on the triangular inequality (see Appendix~\ref{appendsssec:robust}). 
    Hence, we see that $d_{KT}$ is robust 
    while for the Wasserstein distance,
    a contamination $C$ 
    arbitrarily ``far away from the distribution $Q$" will incur an arbitrarily high distance. The proof of robustness also works for MMD.

\section{Statistical Properties}
\label{sec:stat}
\subsection{Convergence rate}\label{subsec:rate}
In this section, we consider a measure $\mu$ and its empirical counterpart $\mu_n$ for $n$ independent samples and study the rate of convergence of $\dist(\mu,\mu_n)$. We note $A \lesssim_{\mu^{\otimes n}} b$ where $A$ is r.v., when for any $\delta>0$, there exists $c_\delta<\infty$ such that $\mu^{\otimes n}(A\leq c_\delta b )\geq \delta$.
With the Schatten 1-norm, it is not enough to study only the concentration of one (the maximal) eigenvalue as for the operator norm ($p=\infty$), we need to handle an infinity of eigenvalues (when the RKHS is of infinite dimension), neither can we use the Cauchy-Schwarz trick as for the Hilbert norm ($p=2$). 
However, since the trace of our kernel density operators are bounded by 1, only a few of the eigenvalues will have a significant contribution. Therefore, assuming some decay rate on those eigenvalues, we can focus on the convergence of operators on a subspace of the top eigenvectors, using results from the Kernel PCA literature. We introduce the  population and empirical square loss associated with some projector $P$:
\begin{equation*}
    R(P) = \mathbb{E}_{X\sim \mu} || \phi(X) - P \phi(X)||_\mathcal{H}^2, \qquad 
    R_n(P) = \sum_{i=1}^n \frac{1}{n} || \phi(x_i) - P \phi(x_i)||_\mathcal{H}^2
\end{equation*}
where the $(x_i)_{i=1...n}$ are each drawn independently from $\mu$.
We first make the following assumption, as in~\citet{sterge2020gain}.
\begin{assumptionp}{2} \label{assum:eigendef}
    The eigenvalues $(\lambda_i)_{i\in I}$ of $\Sigma_\mu$ (resp. $(\hat{\lambda}_j)_{j\in J}$ of $\Sigma_{\mu_n}$) are positive, simple and w.l.o.g. arranged in decreasing order ($\lambda_1\geq\lambda_2\geq...$).
\end{assumptionp}
This allows us to
denote $P^l(\Sigma_\mu)$ the projector on the subspace of the $l$ eigenvectors associated with the $l$ highest eigenvalues $\lambda_1,...,\lambda_l$.
Note that $\SL{1}P^l(\Sigma_\mu)\Sigma_\mu-\Sigma_\mu\SR{1} = \sum_{i>l} \lambda_l = R(P^l(\Sigma_\mu))$
(see for instance~\citet{blanchard2007statistical,rudi2013sample}). Similarly we consider $P^l(\Sigma_{\mu_n})$ for $\Sigma_{\mu_n}$. 


We now consider different kinds of assumptions on the decay rate of  eigenvalues of $\Sigma_\mu$ to get different corresponding  convergence rates, as in~\citet{sterge2020gain,sterge2022statistical}.
\begin{assumptionp}{P (Polynomial)} 
For some $\alpha>1$ and $0<\underline{A}<\bar{A}<\infty$, \begin{equation*}
        \underline{A}i^{-\alpha}\leq \lambda_i \leq \bar{A}i^{-\alpha}. \tag{P} \label{AssumPoly} 
    \end{equation*}
\end{assumptionp}

\begin{assumptionp}{E (Exponential)}
For $\tau>0$ and $\underline{B},\bar{B}\in(0,\infty)$,
            \begin{equation*}
                \underline{B} e^{-\tau i} \leq \lambda_i \leq \bar{B} e^{-\tau i}. \tag{E} \label{AssumExp}
            \end{equation*}
\end{assumptionp}

    
            

\begin{lemma}
\label{lem:decay}
    Suppose Assumption~\ref{assum:kernel1} and \ref{assum:eigendef} are verified.
    With a polynomial decay rate of order $\alpha>1$ (Assumption \ref{AssumPoly}), for $l=n^\frac{\theta}{\alpha},0<\theta\leq\alpha$: 
    \begin{equation}\label{eq:poly1}     \SL{1}P^l(\Sigma_\mu)\Sigma_\mu-\Sigma_\mu\SR{1} = R(P^l(\Sigma_\mu)) = \Theta \left( n^{-\theta(1-\frac{1}{\alpha})}\right),
\quad        \SL{2} P^l(\Sigma_{\mu})\Sigma_\mu -\Sigma_\mu \SR{2} =\Theta \left( n^{-\theta(1-\frac{1}{2\alpha})}\right),
    \end{equation}
    and there exists $N\in\mathbb{N}$ such that for $n>N$:
    \begin{equation}\label{eq:poly2emp}
    \hspace{-0.15cm}   \SL{2} P^l(\Sigma_{\mu_n})\Sigma_\mu -\Sigma_\mu \SR{2} \lesssim_{\mu^{\otimes n}} max(
        n^{-\frac{1}{2}+\frac{1}{4\alpha}},n^{-\theta+\frac{1}{4\alpha}})
        .
    \end{equation}
    With an exponential decay rate (Assumption \ref{AssumExp}), for $l=\frac{1}{\tau} \log n^\theta,\theta>0$:
\begin{equation}\label{eq:exp1} \SL{1}P^l(\Sigma_\mu)\Sigma_\mu-\Sigma_\mu\SR{1} =
        R(P^l(\Sigma_\mu)) = \Theta(n^{-\theta}),
  \qquad      \SL{2} P^l(\Sigma_{\mu})\Sigma_\mu -\Sigma_\mu \SR{2} = \Theta\left( n^{-\theta} \right)
    \end{equation}
    
    and there exists $N\in\mathbb{N}$ such that for $n>N$:
    \begin{equation}\label{eq:exp2emp}
            \SL{2} P^l(\Sigma_{\mu_n})\Sigma_\mu -\Sigma_\mu \SR{2} \lesssim_{\mu^{\otimes n}} \left\{
                \begin{array}{ll}
                    \sqrt{\frac{\log n}{n^\theta}} & \mbox{if } \theta < 1 \\
                     \frac{(\log n)}{\sqrt{n}} & \mbox{if } \theta \geq 1. 
                \end{array}
            \right.
    \end{equation}
    
\end{lemma}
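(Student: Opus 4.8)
The plan is to separate the four population identities in \eqref{eq:poly1} and \eqref{eq:exp1}, which are exact spectral computations, from the two high-probability empirical bounds in \eqref{eq:poly2emp} and \eqref{eq:exp2emp}, which require concentration. For the population quantities I would start from the identity recorded just before the lemma: since $P^l(\Sigma_\mu)$ projects onto the top-$l$ eigenspace and $\Sigma_\mu$ is diagonal in its own eigenbasis, $I-P^l(\Sigma_\mu)$ commutes with $\Sigma_\mu$, so $(I-P^l(\Sigma_\mu))\Sigma_\mu$ is self-adjoint with spectrum $\{\lambda_i\}_{i>l}$. Hence $\SL{1}P^l(\Sigma_\mu)\Sigma_\mu-\Sigma_\mu\SR{1}=\sum_{i>l}\lambda_i$ and $\SL{2}P^l(\Sigma_\mu)\Sigma_\mu-\Sigma_\mu\SR{2}=(\sum_{i>l}\lambda_i^2)^{1/2}$. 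Comparing the tail sums with integrals under \eqref{AssumPoly} gives $\sum_{i>l}\lambda_i\asymp l^{1-\alpha}$ and $\sum_{i>l}\lambda_i^2\asymp l^{1-2\alpha}$, while geometric-series bounds under \eqref{AssumExp} give $\sum_{i>l}\lambda_i\asymp e^{-\tau l}$ and $\sum_{i>l}\lambda_i^2\asymp e^{-2\tau l}$. Substituting $l=n^{\theta/\alpha}$, respectively $l=\tfrac{\theta}{\tau}\log n$, and simplifying the exponents yields \eqref{eq:poly1} and \eqref{eq:exp1}.

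For the empirical bounds the object to control is $(I-\hat P)\Sigma_\mu$ with $\hat P=P^l(\Sigma_{\mu_n})$, and I would first split it into a bias and a variance contribution by inserting the population projector: $(I-\hat P)\Sigma_\mu=(I-P^l(\Sigma_\mu))\Sigma_\mu+(P^l(\Sigma_\mu)-\hat P)\Sigma_\mu$. The first term is the population Schatten-$2$ quantity already evaluated above. The two probabilistic inputs feeding the second term are the Hilbert--Schmidt concentration of the density operator, $\SL{2}\Sigma_{\mu_n}-\Sigma_\mu\SR{2}\lesssim_{\mu^{\otimes n}}n^{-1/2}$ (valid because each feature satisfies $\SL{2}\varphi(x)\varphi(x)^*\SR{2}=k(x,x)=1$ under Assumption~\ref{assum:kernel1}, so a Bernstein inequality in the Hilbert--Schmidt class applies), and Weyl's inequality $|\hat\lambda_j-\lambda_j|\le\SL{\infty}\Sigma_{\mu_n}-\Sigma_\mu\SR{\infty}$, which ensures that for $n>N$ the $l$-th spectral gap $\delta_l=\lambda_l-\lambda_{l+1}$ of $\Sigma_\mu$ is inherited by $\Sigma_{\mu_n}$ and $\hat P$ has the correct rank; this is the source of the qualifier ``there exists $N$'' in the statement.

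Bounding the variance term $\SL{2}(P^l(\Sigma_\mu)-\hat P)\Sigma_\mu\SR{2}$ sharply is the crux, and here I would import the kernel-PCA perturbation machinery of \citet{blanchard2007statistical,rudi2013sample,sterge2020gain,sterge2022statistical}. A naive Davis--Kahan estimate $\SL{\infty}\hat P-P^l(\Sigma_\mu)\SR{\infty}\lesssim\SL{\infty}\Sigma_{\mu_n}-\Sigma_\mu\SR{\infty}/\delta_l$, combined with $\SL{2}(P^l(\Sigma_\mu)-\hat P)\Sigma_\mu\SR{2}\le\SL{\infty}\hat P-P^l(\Sigma_\mu)\SR{\infty}\SL{2}\Sigma_\mu\SR{2}$, is too lossy and does not reproduce the stated exponents, because the shrinking gap $\delta_l$ enters too harshly. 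The fix is to localise the analysis through the effective dimension $\mathcal N(\gamma)=\sum_i\lambda_i/(\lambda_i+\gamma)$, which scales as $\gamma^{-1/\alpha}$ under \eqref{AssumPoly} and as $\tfrac1\tau\log(1/\gamma)$ under \eqref{AssumExp}. Taking the resolution scale $\gamma\asymp n^{-1/2}$ produces a $\theta$-independent variance floor of order $n^{-1/2}$ times a power of $\mathcal N(\gamma)$, which evaluates to $n^{-1/2+1/(4\alpha)}$ in the polynomial case and to the poly-logarithmic floor $\log n/\sqrt n$ in the exponential case.

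Finally I would combine the bias and variance exponents. In the polynomial case the bias branch $n^{-\theta+1/(4\alpha)}$ and the variance floor $n^{-1/2+1/(4\alpha)}$ cross over at $\theta=1/2$, so their maximum is exactly the right-hand side of \eqref{eq:poly2emp}; in the exponential case the bias of order $n^{-\theta}$ is dominated by the variance for $\theta<1$ and saturates at the floor for $\theta\ge1$, giving the two-regime bound \eqref{eq:exp2emp}. I expect the genuinely hard step to be the sharp perturbation bound on $\SL{2}(P^l(\Sigma_\mu)-\hat P)\Sigma_\mu\SR{2}$ with effective-dimension rather than crude-gap scaling, together with the bookkeeping that makes the concentration and gap events hold simultaneously with the probability implicit in $\lesssim_{\mu^{\otimes n}}$; once those exponents are secured, assembling the $\max$ and the piecewise rates is routine.
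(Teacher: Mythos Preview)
Your treatment of the population quantities in \eqref{eq:poly1} and \eqref{eq:exp1} matches the paper exactly: both reduce to tail sums $\sum_{i>l}\lambda_i$ and $\sum_{i>l}\lambda_i^2$ and bracket them by integrals (the paper cites Corollaries~3--4 of \citet{sterge2020gain} and then replaces $\alpha$ by $2\alpha$ for the squared eigenvalues).

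For the empirical bounds \eqref{eq:poly2emp} and \eqref{eq:exp2emp}, however, the paper takes a genuinely different route from your additive bias--variance split, and in doing so it entirely sidesteps the step you flag as hard. In the polynomial case the paper never introduces $P^l(\Sigma_\mu)$ at all; instead it inserts the regularised operators $\Sigma_t=\Sigma_\mu+tI$ and $\Sigma_{n,t}=\Sigma_{\mu_n}+tI$ and factorises multiplicatively:
\[
\|(I-\hat P)\Sigma_\mu\|_2 \;\le\; \|(I-\hat P)\Sigma_{n,t}\|_\infty\,\|\Sigma_{n,t}^{-1}\Sigma_t\|_\infty\,\|\Sigma_t^{-1}\Sigma_\mu\|_2.
\]
The first factor is $\hat\lambda_{l+1}+t\lesssim \lambda_l+t$ (Lemma~A.1(iii) of \citet{sterge2020gain}); the second is $O(1)$ via a matrix Bernstein bound on $\|\Sigma_n-\Sigma\|_\infty$; the third is $O(t^{-1/(2\alpha)})$ by Lemma~7.3 of \citet{rudi2013sample}. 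Choosing $t\asymp n^{-1/2}$ yields $(\lambda_l+t)\,t^{-1/(2\alpha)}\asymp n^{-\theta+1/(4\alpha)}+n^{-1/2+1/(4\alpha)}$ directly, with no projector perturbation at all. In the exponential case the paper is simpler still: it uses $\|(I-\hat P)\Sigma_\mu\|_2\le\|(I-\hat P)\Sigma_\mu^{1/2}\|_2\,\|\Sigma_\mu^{1/2}\|_\infty=\sqrt{R(\hat P)}$, and then quotes the known bound $R(\hat P)\lesssim_{\mu^{\otimes n}}\log n/n^\theta$ (for $\theta<1$) or $(\log n)^2/n$ (for $\theta\ge1$) from \citet{sterge2020gain}.

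Two concrete places where your accounting does not quite line up: in the polynomial case your ``bias branch'' $\|(I-P^l(\Sigma_\mu))\Sigma_\mu\|_2$ has exponent $-\theta+\theta/(2\alpha)$, not $-\theta+1/(4\alpha)$; the $1/(4\alpha)$ in the paper comes from the factor $t^{-1/(2\alpha)}$ at $t\asymp n^{-1/2}$, which has no counterpart in your split. In the exponential case with $\theta<1$, your bias $n^{-\theta}$ and your claimed variance floor $(\log n)/\sqrt n$ are \emph{both} asymptotically smaller than the stated target $\sqrt{(\log n)/n^{\theta}}$, so their maximum cannot produce it; in the paper that rate emerges in one stroke from $\sqrt{R(\hat P)}$, not from any decomposition. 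Your Davis--Kahan/effective-dimension programme may be salvageable, but as written the hard step is a black box and the final bookkeeping does not reproduce the stated exponents, whereas the paper's factorisation trick delivers them with essentially no projector perturbation analysis.
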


The previous lemma (see proof in Appendix~\ref{appendsssec:rate}) is crucial to prove our main theorem below, that provides dimension-independent statistical rates.

\begin{theorem}\label{thm:rates}  Suppose Assumption \ref{assum:kernel1} and \ref{assum:eigendef} are verified.
\begin{itemize}  
    \item     If the eigenvalues of $\Sigma_\mu$ follow a polynomial decay rate of order $\alpha>1$ (Assumption~\ref{AssumPoly}), then: 
    \begin{equation*}
        \dist(\mu,\mu_n) \lesssim_{\mu^{\otimes n}} 
        n^{-\frac{1}{2}+\frac{1}{2\alpha}}.
    \end{equation*}
    \item   If the eigenvalues of $\Sigma_\mu$ follow an exponential decay rate (Assumption~\ref{AssumExp}), then:
    \begin{equation*}
        \dist(\mu,\mu_n) \lesssim_{\mu^{\otimes n}} 
        \frac{(\log n)^\frac{3}{2}}{\sqrt{n}}.
    \end{equation*}
\end{itemize}
\end{theorem}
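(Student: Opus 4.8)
The plan is to bound $\dist(\mu,\mu_n)=\SL{1}\Sigma_\mu-\Sigma_{\mu_n}\SR{1}$ by splitting the difference operator into a finite-rank piece, on which the dimension inequality~\eqref{ineq:dim} trades the Schatten-$1$ norm for a factor $\sqrt{l}$ times a Schatten-$2$ norm, plus genuine \emph{tail} pieces that \Cref{lem:decay} controls directly in Schatten-$1$. Writing $\hat P=P^l(\Sigma_{\mu_n})$ and $P=P^l(\Sigma_\mu)$ for the top-$l$ projectors, I would start from the algebraic identity
\[\Sigma_\mu-\Sigma_{\mu_n}=\hat P(\Sigma_\mu-\Sigma_{\mu_n})\hat P+\bigl(\Sigma_\mu-\hat P\Sigma_\mu\hat P\bigr)-\bigl(\Sigma_{\mu_n}-\hat P\Sigma_{\mu_n}\hat P\bigr),\]
and bound the three Schatten-$1$ norms separately. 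The first term has rank at most $l$, so~\eqref{ineq:dim} gives $\SL{1}\hat P(\Sigma_\mu-\Sigma_{\mu_n})\hat P\SR{1}\le\sqrt{l}\,\SL{2}\Sigma_\mu-\Sigma_{\mu_n}\SR{2}=\sqrt{l}\,\MMD_{k^2}(\mu,\mu_n)$, and the dimension-free MMD rate $\MMD_{k^2}(\mu,\mu_n)\lesssim_{\mu^{\otimes n}}n^{-1/2}$ turns this into $\sqrt{l}\,n^{-1/2}$. Since $\hat P$ commutes with $\Sigma_{\mu_n}$, the third term equals the empirical tail $\sum_{i>l}\hat\lambda_i=R_n(P^l(\Sigma_{\mu_n}))$; using $\Tr\Sigma_\mu=\Tr\Sigma_{\mu_n}=1$ under Assumption~\ref{assum:kernel1} it differs from the population tail $\sum_{i>l}\lambda_i=R(P^l(\Sigma_\mu))$ by $\sum_{i\le l}(\lambda_i-\hat\lambda_i)$, which a Hoffman--Wielandt inequality bounds by $\sqrt{l}\,\SL{2}\Sigma_\mu-\Sigma_{\mu_n}\SR{2}$, again of order $\sqrt{l}\,n^{-1/2}$.

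The remaining term $\SL{1}\Sigma_\mu-\hat P\Sigma_\mu\hat P\SR{1}$ is the crux, and is where the main obstacle lies: it is the population operator truncated in the \emph{empirical} eigenbasis, and $\hat P$ does not commute with $\Sigma_\mu$. I would split $\Sigma_\mu-\hat P\Sigma_\mu\hat P=(I-\hat P)\Sigma_\mu+\hat P\Sigma_\mu(I-\hat P)$. The second summand has rank $\le l$, so~\eqref{ineq:dim} bounds it by $\sqrt{l}\,\SL{2}(I-\hat P)\Sigma_\mu\SR{2}$, which is exactly the empirical-projector Schatten-$2$ quantity estimated in \Cref{lem:decay}. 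For the genuinely infinite-rank summand $(I-\hat P)\Sigma_\mu$ I would interpose the population projector, $(I-\hat P)\Sigma_\mu=(I-\hat P)(I-P)\Sigma_\mu+(I-\hat P)P\Sigma_\mu$: the first piece is dominated in Schatten-$1$ by the population tail $\SL{1}(I-P)\Sigma_\mu\SR{1}=R(P^l(\Sigma_\mu))$ (its Schatten-$1$ estimate in \Cref{lem:decay}), while the second has rank $\le l$ and is again handled by~\eqref{ineq:dim} together with the Schatten-$2$ tail and empirical-projector estimates of \Cref{lem:decay}.

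Collecting, every contribution is a combination of $\sqrt{l}\,n^{-1/2}$, the Schatten-$1$/Schatten-$2$ tails from \Cref{lem:decay}, and $\sqrt{l}$ times the empirical-projector Schatten-$2$ estimate; a union bound over these finitely many $\lesssim_{\mu^{\otimes n}}$ events keeps the statement probabilistic. With $l=n^{\theta/\alpha}$, hence $\sqrt{l}=n^{\theta/(2\alpha)}$, the dominant pieces balance the ``variance'' contribution $\sqrt{l}\,\SL{2}(I-\hat P)\Sigma_\mu\SR{2}\asymp n^{-1/2+\theta/(2\alpha)+1/(4\alpha)}$ against the ``bias'' $R(P^l(\Sigma_\mu))\asymp n^{-\theta(1-1/\alpha)}$; equating exponents gives $\theta=1/2$, for which both equal $n^{-1/2+1/(2\alpha)}$ and every other term is of equal or smaller order, yielding the polynomial rate. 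For exponential decay the same decomposition applies with $l=(\theta/\tau)\log n$, so $\sqrt{l}\asymp\sqrt{\log n}$; taking $\theta\ge 1$, the empirical-projector term contributes $\sqrt{\log n}\cdot(\log n)/\sqrt{n}=(\log n)^{3/2}/\sqrt{n}$, which dominates the residual $\sqrt{\log n}\,n^{-1/2}$ and $n^{-\theta}$ tails, giving the stated rate. The principal difficulty throughout is precisely the non-commutation of $\hat P$ and $\Sigma_\mu$ in $\SL{1}\Sigma_\mu-\hat P\Sigma_\mu\hat P\SR{1}$, which forces the nested population/empirical projector split and makes the Schatten-$2$ empirical-projector bound of \Cref{lem:decay} the load-bearing estimate.
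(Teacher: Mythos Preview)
Your argument is correct and reaches the same rates with the same choices $\theta=\tfrac12$ (polynomial) and $\theta\ge 1$ (exponential), but the route differs from the paper's in two places worth noting.

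\emph{Decomposition.} The paper uses a one-sided split
\[
\Sigma-\Sigma_n=(\Sigma-P\Sigma)+(P-\hat P)\Sigma+\hat P(\Sigma-\Sigma_n)+(\hat P\Sigma_n-\Sigma_n),
\]
bounding the two middle terms by $\sqrt{2l}$ and $\sqrt{l}$ times Schatten-$2$ quantities via~\eqref{ineq:dim}, and recognising the outer two as $R(P^l(\Sigma_\mu))$ and $R_n(P^l(\Sigma_{\mu_n}))$. Your two-sided sandwich $\hat P(\cdot)\hat P$ forces an extra nested split for $\Sigma_\mu-\hat P\Sigma_\mu\hat P$, but after interposing $P$ you land on the same building blocks $R(P^l(\Sigma_\mu))$, $\SL{2}(I-P)\Sigma_\mu\SR{2}$, $\SL{2}(I-\hat P)\Sigma_\mu\SR{2}$ and $\SL{2}\Sigma_\mu-\Sigma_{\mu_n}\SR{2}$ from \Cref{lem:decay}.

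\emph{Empirical tail.} For $R_n(P^l(\Sigma_{\mu_n}))$ the paper invokes an external concentration result (Blanchard et al., their eq.~(30)) to pass to $R(P^l(\Sigma_\mu))+O(\sqrt{l/n})$. Your trace-constraint plus Hoffman--Wielandt argument, writing $\sum_{i>l}\hat\lambda_i-\sum_{i>l}\lambda_i=\sum_{i\le l}(\lambda_i-\hat\lambda_i)$ and bounding by $\sqrt{l}\,\SL{2}\Sigma_\mu-\Sigma_{\mu_n}\SR{2}$, is more self-contained and yields the same order without the outside lemma. Either way the load-bearing estimate is the empirical-projector Schatten-$2$ bound in \Cref{lem:decay}, and the final balancing is identical.
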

\vspace{-0.4cm}

\begin{proofsketch}
For clarity of notation, we abbreviate $\Sigma_\mu$ and $\Sigma_{\mu_n}$ as $\Sigma$ and $\Sigma_n$.
By the triangular inequality:
\begin{align} 
    ||\Sigma-\Sigma_n||_1 \leq 
    ||\Sigma - P^l(\Sigma)\Sigma ||_1 
    + ||(P^l(\Sigma) -P^l(\Sigma_n))\Sigma ||_1 \notag 
    + ||P^l(\Sigma_n)(\Sigma -\Sigma_n) ||_1
\\
\label{ineq:triangleineq}
   +||P^l(\Sigma_n)\Sigma_n - \Sigma_n ||_1 \coloneqq (A) + (B) + (C) + (D)
\end{align}
We bound each term of eq.~\ref{ineq:triangleineq}.
Term (A) is bounded using Lemma~\ref{lem:decay}. Similarly, (D) relates to (A) by a result due to ~\citet{blanchard2007statistical} (eq. (30)), see Lemma~\ref{lem:blanchard} in Appendix~\ref{appendsssec:rate}. For (B) and (C), the projections allow to work in a subspace of dimension at most $2l$ and by eq.~(3) (Hölder's inequality) to relate to the Schatten 2-norm which has rates like MMD. Finally, we pick $\theta=\frac{1}{2}$ for polynomial decay and $\theta=1$ for the exponential decay (see Lemma~\ref{lem:decay}) to minimise the maximum of the four terms.
See Appendix~\ref{appendsssec:rate} for the full proof.
\end{proofsketch}

By the Fuchs-van de Graaf inequality (Eq.~(\ref{ineq:FvdG}) and (\ref{ineq:d1kbw})), it directly implies (also dimensionally-independent) convergence rates for the Kernel Bures Wasserstein distance, that are novel to the best of our knowledge.
\begin{corollary}
    Suppose Assumption \ref{assum:kernel1} and \ref{assum:eigendef} verified.
    If Assumption~\ref{AssumPoly} is verified:
      $  \dKBW(\mu,\mu_n) \lesssim_{\mu^{\otimes n}} n^{-\frac{1}{4}+\frac{1}{4\alpha}}.$
     If Assumption~\ref{AssumExp} is verified:
       $ \dKBW(\mu,\mu_n) \lesssim_{\mu^{\otimes n}} (\log n)^\frac{3}{4} n^{-\frac{1}{4}}.$
\end{corollary}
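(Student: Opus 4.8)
The plan is to derive this corollary as an essentially immediate consequence of Theorem~\ref{thm:rates} combined with the lower half of the translated Fuchs-van de Graaf inequality~(\ref{ineq:d1kbw}). That inequality reads $d_{KBW}(\mu,\nu)^2 \leq \dist(\mu,\nu)$, and since both sides are nonnegative, taking square roots gives the deterministic pointwise bound
\begin{equation*}
    \dKBW(\mu,\mu_n) \leq \sqrt{\dist(\mu,\mu_n)}.
\end{equation*}
Because $x\mapsto\sqrt{x}$ is nondecreasing on $[0,\infty)$, this single inequality is enough to push any high-probability upper bound on $\dist(\mu,\mu_n)$ through to $\dKBW(\mu,\mu_n)$, with the rate simply getting square-rooted.

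Concretely, I would unfold the definition of $\lesssim_{\mu^{\otimes n}}$. Fix $\delta>0$. Theorem~\ref{thm:rates} furnishes a constant $c_\delta<\infty$ with $\mu^{\otimes n}\bigl(\dist(\mu,\mu_n)\leq c_\delta\,b_n\bigr)\geq \delta$, where $b_n = n^{-\frac{1}{2}+\frac{1}{2\alpha}}$ under the polynomial assumption~\ref{AssumPoly} and $b_n = (\log n)^{3/2} n^{-1/2}$ under the exponential assumption~\ref{AssumExp}. On the event $\{\dist(\mu,\mu_n)\leq c_\delta\,b_n\}$, the displayed inequality forces $\dKBW(\mu,\mu_n)\leq \sqrt{c_\delta}\,\sqrt{b_n}$. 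Setting $c'_\delta \coloneqq \sqrt{c_\delta}<\infty$, this event is contained in $\{\dKBW(\mu,\mu_n)\leq c'_\delta\sqrt{b_n}\}$, whence $\mu^{\otimes n}\bigl(\dKBW(\mu,\mu_n)\leq c'_\delta\sqrt{b_n}\bigr)\geq\delta$, i.e. $\dKBW(\mu,\mu_n)\lesssim_{\mu^{\otimes n}}\sqrt{b_n}$.

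It then only remains to evaluate $\sqrt{b_n}$ in each regime. In the polynomial case $\sqrt{b_n}=\bigl(n^{-\frac{1}{2}+\frac{1}{2\alpha}}\bigr)^{1/2}=n^{-\frac{1}{4}+\frac{1}{4\alpha}}$, and in the exponential case $\sqrt{b_n}=\bigl((\log n)^{3/2}n^{-1/2}\bigr)^{1/2}=(\log n)^{3/4}\,n^{-1/4}$, matching the two claimed bounds exactly.

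I do not expect any genuine obstacle: the whole argument is monotonicity of the square root layered onto the already-established bound~(\ref{ineq:d1kbw}) and the already-proved Theorem~\ref{thm:rates}. The only point worth stating carefully is the event-wise transfer in the second paragraph, which confirms that applying the fixed deterministic inequality $\dKBW\le\sqrt{\dist}$ preserves the $\lesssim_{\mu^{\otimes n}}$ relation (with the constant replaced by its square root); everything else is a routine rate computation.
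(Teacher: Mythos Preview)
Your proposal is correct and follows exactly the paper's own argument: the corollary is deduced directly from Theorem~\ref{thm:rates} via the Fuchs--van de Graaf inequality~(\ref{ineq:d1kbw}), i.e.\ $d_{KBW}^2\le\dist$, and then taking square roots of the two rates.
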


\section{Experiments}\label{sec:exp}
In this section, we illustrate the interest of our novel kernel trace distance on different experiments.
\vspace{-0.2cm}
\paragraph{Approximate Bayesian Computation (ABC)}
The purpose of Approximate Bayesian Computation~\citep{tavare1997inferring} is to compute an approximation of the posterior when doing Bayesian inference 
in a likelihood-free fashion.
The idea of using a distance $d$ between distributions
 to build a synthetic likelihood has recently flourished~\citep{frazier2020robust,
bernton2019approximate,
jiang2018approximate}. 
ABC methods based on IPM 
enjoy theoretical guarantees~\citep{legramanti2022concentration}. The ABC posterior distribution is defined by
$\pi(\theta|X^{n}) 
    \propto \int \pi(\theta) \mathds{1}_{\{d(X^{n},Y^{m}) < \epsilon\}}p_{\theta}(Y^{m}) \mathrm{d}Y^{m}$,
where $\pi(\theta)$ is a prior over the parameter space $\Theta$, $\epsilon > 0$ is a tolerance threshold, and $Y^m$ are synthetic data generated according to $p_{\theta}(Y^{m}) = \prod_{j=1}^{m}p_{\theta}(Y_{j})$. It is approximately computed by drawing $\theta_i \sim \pi$ for $i=1,...,T$ and simulating synthetic data $Y^{m}\sim p_{\theta_i}$ and keeping or rejecting $\theta_i$ according to whether the synthetic data is close to the real data. The result is a list $L_\theta$ of all accepted $\theta_i$ (see Algo.~\ref{alg:reject_abc} in the Appendix~\ref{appendssec:XPABC}).

Here, as we are interested in robustness, we will consider 
a contamination case 
using 
 Normal distributions but where nonetheless the usual likelihood fails 
to recover the correct mean 
 as the data is corrupted. We will take as prior $\pi = \mathcal{N}(0,\sigma_0^2)$ and the real data consist of $n=100$ samples coming following $\mu^*=\mathcal{N}(\theta^*=1,1)$ where $10\%$ of the samples are replaced by contaminations from $\mathcal{N}(20,1)$. We fit the model $p_\theta = \mathcal{N}(\theta,1)$ by picking the best $\theta$ possible. We carry out $T=10000$ iterations, generating each times $m=n$ synthetic data.

We consider ABC with the threshold value $\epsilon=0.05,0.25,0.5,1$. 
For the traditional likelihood, 
Bayes' rule gives posterior 
$p(\theta|x)=\mathcal{N}(\frac{\sum_{i=1}^n x_i}{n+\frac{1}{\sigma_0^2}},\frac{1}{n+\frac{1}{\sigma_0^2}})$. 
Since $\mathbb{E}[X_i]=0.9\times 1+0.1\times 20=2.9$ the location is therefore in expectation $\mathbb{E}[\frac{\sum_{i=1}^n x_i}{n+\frac{1}{\sigma_0^2}}]=\frac{n}{n+\frac{1}{\sigma_0^2}} 2.9  \approx 2.9$, the contamination significantly impacted the posterior.
Similarly, for any model $p_\theta$, the Wasserstein distance with the contaminated mixture $0.9 \mathcal{N}(1,1) + 0.1 \mathcal{N}(20,1)$ will be high, and empirically all of the $T$ iterations are rejected for  all the values of $\epsilon$ considered. Thus, we disregard the Wasserstein distance from the experiment and compare the performance of MMD to that of $\dist$. We also consider concurrent methods out of our scope such as 
MMD with the unbounded energy kernel: $k(x,y)=\frac{1}{2}(||x||+||y||-|| x-y||)$ ~\cite{sejdinovic2013equivalence}, and others displayed in Appendix~\ref{appendssec:XPABC}.

We measure the average Mean Square Error between the target parameter $\theta^*=1$ and the accepted $\theta_i\in L_\theta$: $\widehat{MSE}= \frac{1}{|L_\theta|}\sum_{\theta_i\in L_\theta} ||\theta_i-\theta^*||^2$ which also corresponds to the average of squared Wasserstein 2-distance as $W_2^2(\mu^*,p_{\theta_i})=||\theta_i-\theta^*||^2$ since we consider only Gaussians with same variance. We picked $\sigma_0=5$ for the prior. We repeat $10$ times the experiment with fresh samples, the averaged results 
are shown in Table~\ref{tab:abc}. 
As expected -- and discussed in subsection~\ref{subsec:mmdpitfalls} -- MMD (gaussian) is too lenient to accept. For $\epsilon=0.05$ inferior to the contamination level ($10\%$), it still accept $11\%$ of the times, while $\dist$ reject all the times, which can be understood as $\dist$ detecting the contamination, that prevents to match with the Gaussian model. 
The energy kernel can not help enough to beat $\dist$. The densities of the obtained posteriors  are shown alongside the target in Fig.~(\ref{fig:posteriors}) and (\ref{fig:otherposteriors}) in the Appendix~\ref{appendssec:XPABC}.

\begin{table}[h]
\caption{Average MSE of ABC Results. \\ 
The Gaussian kernel is used with $\sigma=1$ (as the variance of $p_\theta$ and $\mu^*$).
As expected, MMD is too lenient to accept most sampled $\theta_i$ leading to a high average MSE unless $\varepsilon$ is carefully chosen. Whereas the proposed $\dist$ discriminates between the correct and the wrong $\theta_i$ for $\varepsilon$ larger than the contamination threshold $0.1$. MMD is assumed to use the Gaussian kernel while 
$\mathrm{MMD_E}$ denotes the 
MMD with the energy kernel.
\label{tab:abc} }
\centering
\begin{tabular}{c ccc ccc ccc}
\toprule
 $\varepsilon$ & \multicolumn{3}{c}{0.05} & \multicolumn{3}{c}{0.25} & \multicolumn{3}{c}{0.5} \\
 \midrule
 distance & $\mathrm{MMD}$ & $\mathrm{MMD_E}$ &  $\dist$ 
 & $\mathrm{MMD}$ & $\mathrm{MMD_E}$ &  $\dist$ 
 & $\mathrm{MMD}$ & $\mathrm{MMD_E}$ &  $\dist$ \\
\cmidrule(lr){2-4} \cmidrule(lr){5-7} \cmidrule(lr){8-10}%
 \#accept. & 1092 & 0 & 0 & 2964 & 0 & 58 & 6168 & 846 & 828\\
 MSE &  0.19 & N/A & N/A & 1.29 & N/A & \textbf{0.03} & 7.47 & 0.17 & 0.12\\  
\bottomrule
\end{tabular}
\end{table}

\paragraph{Particle Flow}
We consider
the performance of gradient descent 
when optimizing  $\mu \mapsto d_{KT}(\mu,\nu)$ for discrete measures $\mu,\nu$ on $\mathbb{R}^2$, given an  initial point cloud (in red) and a target cloud of points (in blue) both of $n=100$ points. We run the scheme with a learning rate of $0.005$ for $1000$ steps, using $\dist$ (Schatten 1-norm) 
and MMD (Schatten 2-norm), 
see Appendix~\ref{appendssec:XPGF} (Figs.~\ref{fig:flowd1} and \ref{fig:flowmmd}).
We use the Laplacian kernel: $k(x,y)=e^{-\frac{||x-y||_1}{\sigma}}$ where here $||\cdot||_1$ 
means the $l_1$ norm for vectors.
We choose a bandwidth $\sigma=1$ (as the image size is a unit square) for $\dist$ and for MMD we use $k^2$ as kernel to match the Schatten 2-norm (i.e. we use $\sigma=0.5$ instead of $\sigma=1$, and it gives a better convergence). 
The inherent internal energy of MMD incites the point cloud to spread out and therefore some particles are still left out far away from the target, which does not happen with $\dist$.



\section{Conclusion}
We introduced a 
robust distance between probability measures, based on 
RKHS density (or covariance) 
operators 
and their Schatten-1 norm.
It is the greatest in a family of kernel-based IPM including 
MMD, and so is more discriminative as shown in experiments. 
We show how to compute it between discrete measures via a new kernel trick.
Assuming some decay rate of the eigenvalues of the RKHS density operator leads to a statistical convergence rate 
that can be  close to $O(n^{-\frac{1}{2}})$.  This implies 
the first (dimension-independent) rates 
for the Kernel Bures Wasserstein distance. 
Future work includes reducing computational complexity via Nyström method
, improving the dependence on the order of decay $\alpha$, as well as minimax lower bounds.
\bibliography{ref}
\bibliographystyle{plainnat}


\newpage
\appendix
\onecolumn

\section{PROOFS}\label{appendsec:proofs}

\subsection{Discriminative properties (proofs of section~\ref{sec:motivation})}
\subsubsection{Comparison with other distances (proofs of subsection~\ref{subsec:firstprop})} \label{appendsssec:firstprop}

\newtheorem*{prop:MMDschatten2}{Lemma \ref{prop:MMDschatten2}}
\begin{prop:MMDschatten2}
The Schatten 2-norm of the difference of the kernel density operators (using kernel $k$) of two distributions corresponds to their Maximum Mean Discrepancy using the kernel $k^2$:
\begin{equation}
     \STL\Sigma_\mu-\Sigma_\nu\STR = MMD_{k^2}(\mu,\nu)   
\end{equation}
Consequently, $MMD_{k^2}(\mu,\nu) \leq \dist(\mu,\nu)$
\end{prop:MMDschatten2}

\begin{proof}
    We have :
    \begin{align*}
        \langle \Sigma_\mu, \Sigma_\nu\rangle &= Tr(\Sigma_\mu \Sigma_\nu^*) 
        = Tr(\Sigma_\mu \Sigma_\nu) \\
        &= Tr(\int_\mathcal{X} \varphi(x)\varphi(x)^* \mu(x)dx  \int_\mathcal{Y} \varphi(y)\varphi(y)^* \nu(y)dy) \\
        &= \int_\mathcal{X} \int_\mathcal{Y} Tr(\varphi(x)\varphi(x)^*  \varphi(y)\varphi(y)^*) \mu(x) \nu(y)dxdy \\
        &= \int_\mathcal{X} \int_\mathcal{Y} k(x,y)k(x,y) \mu(x)  \nu(y)dx dy
    \end{align*}
    And $\SL{2} \Sigma_\mu-\Sigma_\nu \SR{2}^2 = \langle \Sigma_\mu, \Sigma_\mu \rangle + \langle \Sigma_\nu, \Sigma_\nu\rangle  -2\langle \Sigma_\mu, \Sigma_\nu\rangle$, hence the result.
\qed
\end{proof}

\begin{lemma}\label{lem:gauss}
For the Gaussian kernel $k(x,y) = e^{-\frac{||x-y||^2}{2\sigma^2}}$, we have:
\begin{align*}
    ||\varphi(x) - \varphi(y) ||_\mathcal{H} \leq \frac{||x-y||}{\sigma}
\end{align*}
\end{lemma}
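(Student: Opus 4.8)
The plan is to reduce the claim to a one-variable elementary inequality via the reproducing property. First I would expand the squared norm using bilinearity and the definition $k(x,y) = \langle \varphi(x), \varphi(y)\rangle_\mathcal{H}$:
\begin{equation*}
\|\varphi(x) - \varphi(y)\|_\mathcal{H}^2 = k(x,x) - 2k(x,y) + k(y,y).
\end{equation*}
For the Gaussian kernel we have $k(x,x) = k(y,y) = e^0 = 1$, so this simplifies to $2\bigl(1 - k(x,y)\bigr) = 2\bigl(1 - e^{-\|x-y\|^2/(2\sigma^2)}\bigr)$. This recovers exactly the cost $c_k(x,y) = \sqrt{2(1-k(x,y))}$ introduced before Corollary~\ref{cor:wasserineq}.

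Next I would compare this quantity with the target bound $\|x-y\|^2/\sigma^2$. Setting $t = \|x-y\|^2/(2\sigma^2) \geq 0$, the desired inequality $2(1 - e^{-t}) \leq 2t$ is equivalent to the elementary estimate $e^{-t} \geq 1 - t$, valid for all $t \in \mathbb{R}$ (it is the convexity/tangent-line bound $e^u \geq 1 + u$ with $u = -t$). Substituting back $t = \|x-y\|^2/(2\sigma^2)$ yields $\|\varphi(x)-\varphi(y)\|_\mathcal{H}^2 \leq \|x-y\|^2/\sigma^2$, and taking square roots gives the statement.

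There is no genuine obstacle here: the only non-algebraic ingredient is the standard inequality $e^{-t} \geq 1-t$, which is immediate. The step deserving the most care is simply the bookkeeping that $k(x,x)=1$ holds for the Gaussian kernel (consistent with Assumption~\ref{assum:kernel1}), since the clean cancellation to $2(1-k(x,y))$ relies on it; for a general kernel not normalised to one on the diagonal the identity would carry the extra diagonal terms and the bound would change accordingly.
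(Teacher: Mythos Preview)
Your proposal is correct and follows essentially the same approach as the paper: expand the squared RKHS distance via the reproducing property to get $2(1-k(x,y))$, then apply the elementary inequality $1-e^{-t}\leq t$ with $t=\|x-y\|^2/(2\sigma^2)$ and take square roots. The only cosmetic difference is that you work with the squared norm throughout while the paper carries the square root from the start.
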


\begin{proof}
    \begin{align*}
    ||\varphi(x) - \varphi(y) ||_\mathcal{H} &= \sqrt{||\varphi(x)||_\mathcal{H}^2+||\varphi(y)||_\mathcal{H}^2-2k(x,y)}\\
    &=\sqrt{2(1-k(x,y))} \\
    &\leq \sqrt{2 \frac{||x-y||^2}{2\sigma^2}}\\
    &= \frac{||x-y||}{\sigma}
\end{align*}
where we used $1-e^{-x}\leq x$.
\qed
\end{proof}

Thanks to Lemma~\ref{lem:gauss}, we can prove Corollary~\ref{cor:wasserineq}:

\newtheorem*{cor:wasserineq}{Corollary~\ref{cor:wasserineq}}
\begin{cor:wasserineq}
If Assumption~\ref{assum:kernel1} is verified,
\begin{align*}
    \dist(\mu,\nu) \leq 2 W_{c_k}(\mu,\nu).
\end{align*}
Furthermore, using the Gaussian kernel with parameter $\sigma$,
\begin{align*}
    \dist(\mu,\nu) \leq 2 W_{c_k}(\mu,\nu)\leq \frac{2}{\sigma} W_{||.||}(\mu,\nu). 
\end{align*}
\end{cor:wasserineq}

\begin{proof}
    For any coupling $\pi$ of $\mu$ and $\nu$, any $f\in\mathcal{F}_1$, thanks to Proposition~\ref{thm:IPM}:
    \begin{align*}
        \mathbb{E}_\mu[f(X)] - \mathbb{E}_\nu[f(Y)] &=\mathbb{E}_\pi [f(X) - f(Y)]\\
        &\leq 2 \mathbb{E}_\pi ||\varphi(X)-\varphi(Y)||_\mathcal{H}
    \end{align*}
    This is true for any coupling and any function of $\mathcal{F}_1$ so it stays true when minimising over all couplings and maximising over all $f\in\mathcal{F}_1$. The second inequality follows similarly from Lemma~\ref{lem:gauss}.
    
\end{proof}

Finally, we show how to use the \citet{FuchsDeGraaf} inequalities to frame the kernel trace distance with the Kernel Bures-Wasserstein distance:
\begin{lemma}\label{lem:d1kbw}
When Assumption~\ref{assum:kernel1} is verified,
\begin{align}
     d_{KBW}(\mu,\nu)^2 \leq \dist(\mu,\nu) 
    \leq 2 d_{KBW}(A,B)
\end{align}    
\end{lemma}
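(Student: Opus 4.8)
The plan is to derive both inequalities directly from the Fuchs--van de Graaf inequality~\eqref{ineq:FvdG}, after rewriting the fidelity in terms of the Bures distance using the trace normalization. Write $A = \Sigma_\mu$, $B = \Sigma_\nu$, and $F = F(A,B)$. By Assumption~\ref{assum:kernel1} we have $\Tr A = \Tr B = 1$, so the Bures distance collapses to $d_{KBW}(\mu,\nu)^2 = \Tr A + \Tr B - 2F = 2(1-F)$, while $\dist(\mu,\nu) = \SL{1} A - B \SR{1}$ by definition.

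First I would read off the lower bound. The left inequality of~\eqref{ineq:FvdG} is $2(1-F) \leq \SL{1} A-B \SR{1}$, which is precisely $d_{KBW}(\mu,\nu)^2 \leq \dist(\mu,\nu)$ after the substitutions above; nothing more is needed here. Next, for the upper bound I would start from the right inequality of~\eqref{ineq:FvdG}, namely $\dist(\mu,\nu) \leq 2\sqrt{1-F^2} = 2\sqrt{(1-F)(1+F)}$, and compare it against $2 d_{KBW}(\mu,\nu) = 2\sqrt{2(1-F)}$. Since $A,B$ are density operators of unit trace, the root-fidelity obeys $0 \leq F \leq 1$, hence $1+F \leq 2$ and $(1-F)(1+F) \leq 2(1-F)$; taking square roots yields $2\sqrt{(1-F)(1+F)} \leq 2\sqrt{2(1-F)} = 2 d_{KBW}(\mu,\nu)$, which closes the chain.

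The only point requiring care — and the step I expect to be the main obstacle — is justifying that~\eqref{ineq:FvdG}, usually stated for finite-dimensional density matrices, transfers to the possibly infinite-dimensional trace-class operators $\Sigma_\mu, \Sigma_\nu$, together with the bound $F \leq 1$ in that setting. I would handle both through the spectral theorem: the operators are compact and positive with summable eigenvalues, so one truncates to the span of the top $N$ eigenvectors, applies the finite-dimensional inequality there, and lets $N \to \infty$; the bound $F \leq \sqrt{\Tr A}\,\sqrt{\Tr B} = 1$ then follows from the operator Cauchy--Schwarz inequality. The remaining manipulations are the elementary algebra of the fidelity displayed above.
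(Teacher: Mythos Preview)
Your proof is correct and follows essentially the same approach as the paper: both derive the lower bound directly from the left Fuchs--van de Graaf inequality via $d_{KBW}^2 = 2(1-F)$, and both bound the upper term $2\sqrt{1-F^2}$ by $2d_{KBW}$ through equivalent elementary algebra (you factor $(1-F)(1+F)$ and use $F\leq 1$, while the paper substitutes $F = 1 - \tfrac{1}{2}d_{KBW}^2$ and expands). Your added remark about justifying~\eqref{ineq:FvdG} in infinite dimensions is a point the paper does not address explicitly.
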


\begin{proof}

We have $F(\Sigma_\mu,\Sigma_\nu)=1-\frac{1}{2}d_{KBW}(\mu,\nu)^2$.
Therefore, from $2(1 - F(\Sigma_\mu,\Sigma_\nu) )\leq ||\Sigma_\mu-\Sigma_\nu||_1$, we get $d_{KBW}(\mu,\nu)^2 \leq \dist(\mu,\nu) $. For the second inequality:
\begin{align*}
    ||\Sigma_\mu-\Sigma_\nu||_1 &\leq 2\sqrt{1 - F(\Sigma_\mu,\Sigma_\nu)^2}\\
            &= 2\sqrt{1 - (1-\frac{1}{2}d_{KBW}(\mu,\nu)^2)^2}\\
            & = 2\sqrt{1 - (1-\frac{1}{4}d_{KBW}(\mu,\nu)^4-d_{KBW}(\mu,\nu)^2)}\\
            &=2\sqrt{d_{KBW}(\mu,\nu)^2-\frac{1}{4}d_{KBW}(\mu,\nu)^4}\\
            &=2d_{KBW}(\mu,\nu)\sqrt{1-\frac{1}{4}d_{KBW}(\mu,\nu)^2}\\
            &\leq 2d_{KBW}(\mu,\nu)
\end{align*}
\qed
\end{proof}

\subsection{Proof of \Cref{thm:IPM}}\label{sec:proof_IPM}
\begin{proof}
The definition as an IPM \ref{item:1} comes from the dual definition in Eq.~(\ref{eq:schattendual}).  Indeed,
\begin{align*}
    || \Sigma_\mu - \Sigma_\nu ||_1 &= \sup_{U \in \mathcal{L(H)}, ||U||_\infty = 1} \langle U,\Sigma_\mu - \Sigma_\nu \rangle.
\end{align*}
Then,  $ \langle U,\Sigma_\mu\rangle = \Tr(U^*\mathbb{E}_{X\sim \mu}[\varphi(X)\varphi(X)^*]) = \mathbb{E}_{X\sim \mu}[\Tr(U^*\varphi(X)\varphi(X)^*)]$ and similarly for $\Sigma_\nu$.
For symmetric (around zero) space of functions, one can drop the absolute values in the definition of IPM.
So the formulation of $\mathcal{F}_1$ comes from  $\Tr(U^*\varphi(x)\varphi(x)^*) = \varphi(x)^*U^*\varphi(x)$ (equivalently, picking $U^*$ instead of $U$, since adjunction applied to $\{U \in \mathcal{L(H)}, ||U||_\infty = 1\}$ is a bijection from this space to itself, 
gives $\varphi(x)^*U\varphi(x)$).

Regarding \ref{item:2}, for $x \in \mathcal{X}, \, U\in\mathcal{L(H)}$ with $||U||_\infty = 1$:
    \begin{align*}
        | \varphi(x)^*U\varphi(x) | &\leq ||\varphi(x)||_\mathcal{H} ||U\varphi(x)||_\mathcal{H} \\
        &\leq ||\varphi(x)||_\mathcal{H} ||\varphi(x)||_\mathcal{H} = 1,
    \end{align*}
    where the equality comes from \Cref{assum:kernel1}.
    
Then concerning \ref{item:3}, for $x,y\in \mathcal{X}$:
\begin{align*}
|f(x)-f(y)| &= |\varphi(x)^*U\varphi(x) - \varphi(y)^*U\varphi(y)|\\
&= |\varphi(x)^*U\varphi(x) - \varphi(y)^*U\varphi(x) + \varphi(y)^*U\varphi(x) - \varphi(y)^*U\varphi(y)|\\
&= |(\varphi(x)-\varphi(y))^*U\varphi(x) - \varphi(y)^*U(\varphi(y)-\varphi(x))|\\
&\leq |(\varphi(x)-\varphi(y))^*U\varphi(x)| + |\varphi(y)^*U(\varphi(y)-\varphi(x))|\\
&\leq ||(\varphi(x)-\varphi(y))|| \cdot ||U\varphi(x)|| + ||\varphi(y)||\cdot ||U(\varphi(y)-\varphi(x))||
\end{align*}

by Cauchy-Schwartz.
Because $||U||_\infty \leq 1 $:
\begin{align*}
|f(x)-f(y)| \leq ||\varphi(x)-\varphi(y)|| \cdot ||\varphi(x)|| + ||\varphi(y)|| \cdot||\varphi(y)-\varphi(x)||
\end{align*}
therefore by Assumption 1:
\begin{align*}
|f(x)-f(y)| \leq 2||\varphi(x)-\varphi(y)||
\end{align*}
\qed
\end{proof}

\subsubsection{Normalised energy (proof of subsection~\ref{subsec:mmdpitfalls})}\label{appendsssec:mmdpitfalls}
\newtheorem*{prop:modes}{Proposition \ref{prop:modes}}
\begin{prop:modes}
    Let's consider distances between two mixtures $P=\frac{1}{2}\mu_1+\frac{1}{2}\mu_2$ and $Q=\frac{1}{2}\nu_1+\frac{1}{2}\nu_2$ such that $\Sigma_{\mu_1},\Sigma_{\nu_1}$ are orthogonal to $\Sigma_{\mu_2},\Sigma_{\nu_2}$. Then:
    \begin{align*}
        &\dist(P,Q) = \frac{1}{2}\dist(\mu_1,\nu_1) + \frac{1}{2}\dist(\mu_2,\nu_2)\\
        &MMD_{k^2}^2(P,Q) = \frac{1}{4}MMD_{k^2}^2(\mu_1,\nu_1) + \frac{1}{4}MMD_{k^2}^2(\mu_2,\nu_2) .
    \end{align*}
    \normalsize
\end{prop:modes}

\begin{proof}
 Noting $\alpha = \mu_1 - \nu_1$ and $\beta = \mu_2-\nu_2$ , notice that $\Sigma_P-\Sigma_Q=\Sigma_\alpha+\Sigma_\beta$, and that $\Sigma_\alpha \perp \Sigma_\beta$.
Looking at the dual expression~(\ref{eq:schattendual}) of the Schatten norm:
\begin{align}
    &\SOL \frac{1}{2}\Sigma_\alpha + \frac{1}{2}\Sigma_\beta  \SOR = \sup_{U \in \mathcal{L(H)}, ||U||_\infty = 1} \langle U,\frac{1}{2}\Sigma_\alpha + \frac{1}{2}\Sigma_\beta  \rangle \\
                                        &= \sup_{U=U_1+U_2 \in \mathcal{L(H)},
                                        U_1\in span(\Sigma_\alpha),
                                        U_2\in span(\Sigma_\beta),
                                        ||U_1+U_2||_\infty = 1} \frac{1}{2}\langle U_1,\Sigma_\alpha  \rangle + \frac{1}{2}\langle U_2,\Sigma_\beta  \rangle
\end{align}
where by orthogonality we decomposed without loss of generality $U=U_1+U_2$ where $U_1$ and $U_2$ are restricted to the subspaces defined respectively by $\Sigma_\alpha$ and $\Sigma_\beta$ and so mutually orthogonal.
Therefore $||U_1+U_2||\infty = \max{(||U_1||_\infty,||U_2||_\infty)}$ by orthogonality, we can maximise using $||U_1||_\infty=||U_2||_\infty=1$ and recover $||\frac{1}{2}\Sigma_\alpha + \frac{1}{2}\Sigma_\beta||_1 = \frac{1}{2}||\Sigma_\alpha||_1+\frac{1}{2}||\Sigma_\beta||_1$. However, for $p=2$, 
we get by 
the definition of the Schatten norm 
and orthogonality, $||\frac{1}{2}\Sigma_\alpha + \frac{1}{2}\Sigma_\beta||_2^2 = ||\frac{1}{2}\Sigma_\alpha ||_2^2 + || \frac{1}{2}\Sigma_\beta||_2^2 =  \frac{1}{4}||\Sigma_\alpha ||_2^2 + \frac{1}{4}|| \Sigma_\beta||_2^2$.
\qed
\end{proof}

In particular, when $||\Sigma_\alpha||_2=||\Sigma_\beta||_2$ (for instance, $\beta$ is a translation of $\alpha$ and the kernel is translation-invariant), $||\frac{1}{2}\Sigma_\alpha + \frac{1}{2}\Sigma_\beta||_2^2 = \frac{1}{2}||\Sigma_\alpha ||_2^2$, there is a decrease by a factor $\frac{1}{2}$.

\subsection{Statistical properties (proofs of section~\ref{sec:stat})}

\subsubsection{Convergence rate (proof of subsection~\ref{subsec:rate})}
\label{appendsssec:rate}

Remember that for clarity of notation, in the proofs we may abbreviate $\Sigma_\mu$ and $\Sigma_{\mu_n}$ as $\Sigma$ and $\Sigma_n$.

\newtheorem*{lem:decay}{Lemma~\ref{lem:decay}}
\begin{lem:decay} 
    Suppose Assumption~\ref{assum:kernel1} and \ref{assum:eigendef} are verified.
    With a polynomial decay rate of order $\alpha>1$ (Assumption \ref{AssumPoly}), for $l=n^\frac{\theta}{\alpha},0<\theta\leq\alpha$: 
    \begin{equation*}
        \hspace{-0.22cm}\SL{1}P^l(\Sigma_\mu)\Sigma_\mu-\Sigma_\mu\SR{1} = R(P^l(\Sigma_\mu)) = \Theta \left( n^{-\theta(1-\frac{1}{\alpha})}\right),
    \end{equation*}
    \begin{equation*}
        \SL{2} P^l(\Sigma_{\mu})\Sigma_\mu -\Sigma_\mu \SR{2} =\Theta \left( n^{-\theta(1-\frac{1}{2\alpha})}\right),
    \end{equation*}
    and there exists $N\in\mathbb{N}$ such that for $n>N$:
    \begin{equation*}
    \hspace{-0.15cm}   \SL{2} P^l(\Sigma_{\mu_n})\Sigma_\mu -\Sigma_\mu \SR{2} \lesssim_{\mu^{\otimes n}} max(
        n^{-\frac{1}{2}+\frac{1}{4\alpha}},n^{-\theta+\frac{1}{4\alpha}})
        .
    \end{equation*}
    With an exponential decay rate (Assumption \ref{AssumExp}), for $l=\frac{1}{\tau} \log n^\theta,\theta>0$:
    \begin{equation*}
        \SL{1}P^l(\Sigma_\mu)\Sigma_\mu-\Sigma_\mu\SR{1} =
        R(P^l(\Sigma_\mu)) = \Theta(n^{-\theta}),
    \end{equation*}
    \begin{equation*}
        \SL{2} P^l(\Sigma_{\mu})\Sigma_\mu -\Sigma_\mu \SR{2} = \Theta\left( n^{-\theta} \right)
    \end{equation*}
    
    and there exists $N\in\mathbb{N}$ such that for $n>N$:
    \begin{equation*}
            \SL{2} P^l(\Sigma_{\mu_n})\Sigma_\mu -\Sigma_\mu \SR{2} \lesssim_{\mu^{\otimes n}} \left\{
                \begin{array}{ll}
                    \sqrt{\frac{\log n}{n^\theta}} & \mbox{if } \theta < 1 \\
                     \frac{(\log n)}{\sqrt{n}} & \mbox{if } \theta \geq 1. 
                \end{array}
            \right.
    \end{equation*}
    \end{lem:decay}

\begin{proof}
    The proof of the first point concerning $R(P^l(\Sigma_\mu))$ for both polynomial and exponential decay can be found for instance in Corollary 3 and 4 of~\citet{sterge2020gain} which is just bracketing $R(P^l(\Sigma_\mu)) = \sum_{i>l} \lambda_i = \Theta(\sum_{i>l} f(i))$ by some integrals of $f$ (which is the function of polynomial or exponential decay).

    The proof of the second point concerning $\SL{2} P^l(\Sigma_{\mu})\Sigma_\mu -\Sigma_\mu \SR{2} = \sqrt{\sum_{i>l} \lambda_i^2}$ is very similar. For the polynomial decay, the $(\lambda_i)^2$ verify the polynomial decay for $\alpha'=2\alpha$, and $\theta'=2\theta$ so that $0<\theta'\leq\alpha'$ is equivalent to $0<\theta\leq\alpha$ and $n^\frac{\theta'}{\alpha'}=n^\frac{\theta}{\alpha}=l$. Then taking the square root gives the result. Similarly for the exponential decay, the $(\lambda_i)^2$ verify the exponential decay for $\tau'=2\tau$ and $\theta'=2\theta$.

    Now for the proof of the third point, denote $\Sigma_t = \Sigma + t Id$ and similarly for $\Sigma_{n,t} = \Sigma_n + t Id$. Most previous works~\citep{sriperumbudur2022approximate,sterge2020gain} use what they call $\mathcal{N}_\Sigma(t)=Tr(\Sigma (\Sigma + t Id)^{-1})=|| \Sigma^\frac{1}{2}\Sigma_t^{-\frac{1}{2}}||^2_2$, but for us, we will consider rather $|| \Sigma \Sigma_t^{-1}||_2$ and use a result by~\citet{rudi2013sample}.
    The proof in the case of the polynomial decay rate concerning $\SL{2} P^l(\Sigma_{\mu_n})\Sigma_\mu -\Sigma_\mu \SR{2}$ or for short $|| (I-P^l(\Sigma_n))\Sigma||_2$ goes as the following:
    \begin{align}
        || (I-P^l(\Sigma_n))\Sigma||_2 &= ||(I-P^l(\Sigma_n))\Sigma_{n,t}\Sigma_{n,t}^{-1}\Sigma_t\Sigma_t^{-1}\Sigma||_2 \\
        & \leq ||(I-P^l(\Sigma_n))\Sigma_{n,t}||_\infty ||\Sigma_{n,t}^{-1}\Sigma_t||_\infty ||\Sigma_t^{-1}\Sigma||_2.\label{eq:tripleineq}
    \end{align}
    We have:
    \begin{align}\label{eq:I-P}
    ||(I-P^l(\Sigma_n))\Sigma_{n,t}||_\infty &= \widehat{\lambda}_{l+1} + t    \notag \\
    &\leq \frac{3}{2}(\lambda_l+t)
    \end{align}
    with probability $1-\delta$ for $\frac{\kappa}{n}\log(\frac{n}{\delta})\leq t\leq \lambda_1$ for some constant $\kappa$ according to Lemma A.1 (iii) of~\citet{sterge2020gain}.
    Applying Lemma 7.3 of~\citet{rudi2013sample}, we have thanks to the polynomial decay: 
    \begin{equation}\label{eq:Rudi}
        ||\Sigma_t^{-1}\Sigma||_2 = O(t^{-\frac{1}{2\alpha}}).
    \end{equation}
    Finally, let us show that $||\Sigma_{n,t}^{-1}\Sigma_t||_\infty$ is bounded with high probability for an appropriate range of $t$.
    If we note $B_n = \Sigma_{n,t}^{-1}(\Sigma_n-\Sigma)$, then $\Sigma_{n,t}^{-1}\Sigma_t = Id - B_n$. 
    Therefore bounding $||B_n||_\infty$ w.h.p. we can conclude by $||\Sigma_{n,t}^{-1}\Sigma_t||_\infty \leq 1 + ||B_n||_\infty$.
    So to bound $||B_n||_\infty$, notice that: 
    \begin{align}
        || B_n ||_\infty &\leq || \Sigma_{n,t}^{-1}||_\infty ||\Sigma_n-\Sigma||_\infty \\ 
        &= \frac{||\Sigma_n-\Sigma||_\infty}{t}. \label{eq:Bn}
    \end{align}
    
    Next, we want to apply a concentration bound to $||\Sigma_n-\Sigma||_\infty$ to see what range of $t$ can be handled.
    If we write $X_k = \frac{1}{n} (\varphi(x_k)\varphi(x_k)^*-\Sigma)$ so that $\Sigma_n-\Sigma = \sum_{k=1}^n X_k$, then we have $\mathbb{E}[X_k]=0$ and $||X_k||_\infty\leq \frac{2}{n}$. Besides 
    \begin{equation*}
           X_k^2 = (\frac{1}{n})^2(\varphi(x_k)\varphi(x_k)^*-\varphi(x_k)\varphi(x_k)^*\Sigma-\Sigma \varphi(x_k)\varphi(x_k)^* + \Sigma^2) 
    \end{equation*}
    since $\varphi(x_k)\varphi(x_k)^*\varphi(x_k)\varphi(x_k)^*=\varphi(x_k)\varphi(x_k)^*$, therefore 
    \begin{equation*}
            \textbf{Var}[\Sigma_n-\Sigma]=\sum_{k=1}^n X_k^2 \\
            = \frac{1}{n}(\Sigma -\Sigma^2)
            \preceq \frac{1}{n}\Sigma 
    \end{equation*}
    because of the positivity of $\Sigma^2$. With all this, we are ready to apply the 
    Matrix Bernstein inequality for the Hermitian case with intrinsic dimension (Theorem 7.7.1 of~\citet{tropp2015introduction} with $L=\frac{2}{n}$, $V=\frac{1}{n}\Sigma$, $d=intdim(V)=\frac{1}{||\Sigma||_\infty},v=\frac{||\Sigma||_\infty}{n})$ and get for $t'\geq \sqrt{v}+L/3$:
    \begin{equation}
        \mathbb{P}(||\Sigma_n-\Sigma||_\infty \geq t') \leq \frac{1}{||\Sigma||_\infty} \exp\left(\frac{-t'^2/2}{||\Sigma||_\infty/n+2t'/3n}\right)
    \end{equation}
    So assuming we can pick $t'=t$ for instance and combining with eq.~(\ref{eq:Bn}):
    \begin{equation}\label{eq:BnConcentration}
        \mathbb{P}(||B_n||_\infty \leq 1) \geq 1-\frac{1}{||\Sigma||_\infty} \exp\left(\frac{-nt^2/2}{||\Sigma||_\infty+2t/3}\right)
    \end{equation}
    Let us pick $t=\frac{K}{\sqrt{n}}$ for some $K$ big enough, so that the condition on $t'$ and the condition for eq~(\ref{eq:I-P}) to work are satisfied and the exponential term in eq.~(\ref{eq:BnConcentration}) is as small as desired to make the bound sensible (it is true for $n$ big enough).
    Combining the latter eq.~(\ref{eq:BnConcentration}) with eq.~(\ref{eq:I-P}) and (\ref{eq:Rudi}), eq.~(\ref{eq:tripleineq}) gives
    \begin{equation*}
        || (I-P^l(\Sigma_n))\Sigma||_2 \lesssim_{\mu^{\otimes n}}  t^{-\frac{1}{2\alpha}}(\lambda_{l+1} + t)
    \end{equation*}
    and replacing $t=\frac{K}{\sqrt{n}}$ and $\lambda_{l+1} = \Theta((l+1)^{-\alpha})=\Theta(n^{-\theta})$:
    \begin{equation*}
        || (I-P^l(\Sigma_n))\Sigma||_2 \lesssim_{\mu^{\otimes n}}  n^{-\frac{1}{2}+\frac{1}{4\alpha}} + n^{-\theta+\frac{1}{4\alpha}}
    \end{equation*}
    hence the result.

    Now for the case of exponential decay rate, since it is a more powerful hypothesis we use a simpler argument: 
    \begin{align*}
        || (I-P^l(\Sigma_n))\Sigma||_2 &\leq || (I-P^l(\Sigma_n))\Sigma^{1/2}||_2 ||\Sigma^{1/2}||_\infty\\
        &\leq || (I-P^l(\Sigma_n))\Sigma^{1/2}||_2
    \end{align*}
    since $||\Sigma||_1=1$, and it turns out that $||(I-P^l(\Sigma_n))\Sigma^{1/2}||_2=\sqrt{R(P^l(\Sigma_n))}$.
    According to~\citet{sterge2020gain},
    \begin{equation}
        R(P^l(\Sigma_n)) \lesssim_{\mu^{\otimes n}} \left\{
                \begin{array}{ll}
                    \frac{\log n}{n^\theta} & \mbox{if } \theta < 1 \\
                     \frac{(\log n)^2}{n} & \mbox{if } \theta \geq 1 
                \end{array}
            \right.
    \end{equation}
    hence the result.
    \qed
\end{proof}


By the work of~\citet{blanchard2007statistical}, we can state one of their results in a simplified lemma:

\begin{lemma}{\citep{blanchard2007statistical}} \label{lem:blanchard}
Suppose Assumption \ref{assum:kernel1} and \ref{assum:eigendef} are verified.
     For all projector $P$ of rank $l$, with probability at least $1-e^{-\xi}$:
     \begin{equation}
         R_n(P) \leq \frac{3}{2} R(P) + 24 \sqrt{\frac{l}{n}(1-\STL\Sigma_\mu \STR^2)} + \frac{25\xi}{n}
     \end{equation}
     
\end{lemma}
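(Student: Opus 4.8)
The plan is to read the statement as a one-sided relative deviation bound for the empirical process indexed by rank-$l$ projectors, and to derive it from a Bernstein-type uniform concentration inequality. First I would rewrite the two losses as traces: since $I-P$ is a self-adjoint idempotent, $\|\phi(x)-P\phi(x)\|_\mathcal{H}^2=\langle\phi(x),(I-P)\phi(x)\rangle$, so that $R(P)=\Tr((I-P)\Sigma_\mu)$ and $R_n(P)=\Tr((I-P)\Sigma_{\mu_n})$, and the claim becomes a uniform statement over the function class $\mathcal{G}_l=\{g_P:x\mapsto\langle\phi(x),(I-P)\phi(x)\rangle\}$ where $P$ ranges over rank-$l$ orthogonal projectors. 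Under Assumption~\ref{assum:kernel1} one has $g_P(x)=1-\langle\phi(x),P\phi(x)\rangle\in[0,1]$, which supplies the boundedness every concentration argument needs.

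The multiplicative factor $\tfrac32$ is the signature of a localized (ratio-type) bound, and the mechanism I would use is the elementary Bernstein condition for $[0,1]$-valued functions: $\mathrm{Var}(g_P)\le\mathbb{E}[g_P^2]\le\mathbb{E}[g_P]=R(P)$. Feeding this variance control into Talagrand's inequality (in Bousquet's form) applied to $Z=\sup_P(R_n(P)-R(P))$ produces a bound of the shape $R_n(P)\le R(P)+\sqrt{R(P)\cdot c(l,n)}+(\text{tail})$, where $c(l,n)$ is the global complexity scale of $\mathcal{G}_l$ and the tail $\tfrac{25\xi}{n}$ together with the $e^{-\xi}$ confidence come directly from the boundedness by $1$. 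The cross term is then absorbed using $\sqrt{ab}\le\tfrac12 a+\tfrac12 b$, which converts the $\sqrt{R(P)}$-dependence into $\tfrac12 R(P)$ plus complexity and yields the coefficient $1+\tfrac12=\tfrac32$.

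It remains to identify the complexity scale, and here the key computation is a Rademacher bound. By symmetrization, $\mathbb{E}[Z]\lesssim\mathbb{E}\,\mathbb{E}_\sigma\sup_P\tfrac1n\sum_i\sigma_i g_P(x_i)$; dropping the $P$-independent constant and writing $M=\tfrac1n\sum_i\sigma_i\phi(x_i)\phi(x_i)^*$, the supremum over rank-$l$ projectors reduces to the sum of the $l$ largest eigenvalues of $M$, which is at most $\sqrt{l}\,\SL{2}M\SR{2}$. Jensen's inequality and $\mathbb{E}_\sigma[\sigma_i\sigma_j]=\delta_{ij}$ then reduce everything to a second-moment term whose population value is exactly $1-\STL\Sigma_\mu\STR^2=\Tr(\Sigma_\mu)-\Tr(\Sigma_\mu^2)$: indeed $\Tr\Sigma_\mu=\mathbb{E}[k(X,X)]=1$ by Assumption~\ref{assum:kernel1}, while $\STL\Sigma_\mu\STR^2=\mathbb{E}_{X,X'\sim\mu}[k(X,X')^2]$ by Lemma~\ref{prop:MMDschatten2}. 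This is what produces the complexity term $24\sqrt{\tfrac{l}{n}(1-\STL\Sigma_\mu\STR^2)}$, with the $\sqrt{l}$ counting the effective degrees of freedom of a rank-$l$ projector.

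The hard part will be carrying the sharp numerical constants ($\tfrac32$, $24$, $25$) through the localization bookkeeping simultaneously with the precise variance factor $1-\STL\Sigma_\mu\STR^2$: a naive additive uniform deviation would give only $R_n(P)\le R(P)+\mathcal{O}(\sqrt{l/n})$ without the multiplicative sharpening, and matching the exact constants requires the full ratio-type analysis. The cleanest route is therefore to invoke the data-dependent bound of~\citet{blanchard2007statistical} (their eq.~(30)) directly, specialized to rank-$l$ projectors, and then to replace the empirical trace quantities by their population counterparts through the identities above.
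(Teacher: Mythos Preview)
Your proposal is correct and lands on the same approach as the paper: both invoke eq.~(30) of \citet{blanchard2007statistical} specialized with $M=1$ (from Assumption~\ref{assum:kernel1}), and both identify the complexity term $\rho(M,l,n)$ with $\sqrt{\tfrac{l}{n}\,\Tr C_2'}$ where $\Tr C_2'=1-\STL\Sigma_\mu\STR^2$. The paper's justification is in fact nothing more than that citation plus this identification; your sketch of the mechanism behind Blanchard et al.'s bound (Talagrand/Bousquet localization, Bernstein condition $\mathrm{Var}(g_P)\le R(P)$, Rademacher bound via $\sqrt{l}\,\SL{2}M\SR{2}$) is additional detail that the paper does not supply, but it is faithful to how that reference actually proceeds.
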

This comes from their equation (30), with $M=1$ in our case, using $K=2$ and bounding their $\rho(M,l,n)$ by $\sqrt{\frac{l}{n} tr C_2'}$ they mentioned as they suggested (where they described $C_2'=\int_\mathcal{X} (\varphi(x) \otimes \varphi(x)^*)^*\otimes(\varphi(x) \otimes \varphi(x)^*) d\mu(x) - \Sigma_\mu \otimes \Sigma_\mu$). For us, the only important point is that it is a bounded quantity.

\newtheorem*{thm:rates}{Theorem~\ref{thm:rates}}
\begin{thm:rates}
Suppose Assumption \ref{assum:kernel1} and \ref{assum:eigendef} are verified.
\begin{itemize}
  
    \item     If the eigenvalues of $\Sigma_\mu$ follow a polynomial decay rate of order $\alpha>1$ (Assumption~\ref{AssumPoly}), then: 
    \begin{equation*}
        \dist(\mu,\mu_n) \lesssim_{\mu^{\otimes n}} 
        n^{-\frac{1}{2}+\frac{1}{2\alpha}}.
    \end{equation*}
    \item   If the eigenvalues of $\Sigma_\mu$ follow an exponential decay rate (Assumption~\ref{AssumExp}), then:
    \begin{equation*}
        \dist(\mu,\mu_n) \lesssim_{\mu^{\otimes n}} 
        \frac{(\log n)^\frac{3}{2}}{\sqrt{n}}.
    \end{equation*}
\end{itemize}
\end{thm:rates}

\begin{proof}
    By the triangular inequality:
\begin{align}
    ||\Sigma-\Sigma_n||_1 &\leq 
    ||\Sigma - P^l(\Sigma)\Sigma ||_1 
    + ||(P^l(\Sigma) -P^l(\Sigma_n))\Sigma ||_1 
    + ||P^l(\Sigma_n)(\Sigma -\Sigma_n) ||_1
    +||P^l(\Sigma_n)\Sigma_n - \Sigma_n ||_1 \notag \\
    &\leq R(P^l(\Sigma)) + \sqrt{2l}  ||(P^l(\Sigma) -P^l(\Sigma_n))\Sigma ||_2 + \sqrt{l} ||(\Sigma -\Sigma_n) ||_2 + R_n(P^l(\Sigma_n)) \notag \\ 
    &\leq R(P^l(\Sigma)) + \sqrt{2l}  (||(P^l(\Sigma)\Sigma -\Sigma ||_2 + ||P^l(\Sigma_n)\Sigma -\Sigma ||_2) + \sqrt{l} ||(\Sigma -\Sigma_n) ||_2 + R_n(P^l(\Sigma_n)) \label{eq:gigaineq}
\end{align}
where we have mainly used  Schatten norm ``Hölder" inequality between 1-norm and 2-norms (eq.~(5)) as well as
$||\Sigma - P^l(\Sigma)\Sigma ||_1 = \sum_{i>l} \lambda_i = R(P^l(\Sigma))$ and similarly for $\Sigma_n$. Now, thanks to Lemma~\ref{lem:decay}, for the polynomial case:
\begin{itemize}

    \item \mbox{}\vspace{-\baselineskip} \begin{equation*}
            R(P^l(\Sigma)) = \Theta \left( n^{-\theta(1-\frac{1}{\alpha})}\right).
    \end{equation*}
    
    \item By Lemma~\ref{lem:blanchard}, 
    \begin{align*}
            R_n(P^l(\Sigma_{n})) &\leq R_n(P^l(\Sigma))  \\
    &\lesssim_{\mu^{\otimes n}} R(P^l(\Sigma)) + \sqrt{\frac{l}{n}} + \frac{1}{n} \\
    &\lesssim_{\mu^{\otimes n}} \max({R(P^l(\Sigma))}, \sqrt{\frac{l}{n}})\\
    &\lesssim_{\mu^{\otimes n}} \max(n^{-\theta(1-\frac{1}{\alpha})},n^{-\frac{1}{2}+\frac{\theta}{2\alpha}}).
    \end{align*}
    
    \item \mbox{}\vspace{-\baselineskip} \begin{align*}
        \sqrt{2l}  ||(P^l(\Sigma)\Sigma -\Sigma ||_2 &\lesssim_{\mu^{\otimes n}} n^{\frac{\theta}{2\alpha}}n^{-\theta+\frac{\theta}{2\alpha}}\\
        &= n^{-\theta+\frac{\theta}{\alpha}}.
    \end{align*}

    \item \mbox{}\vspace{-\baselineskip} \begin{align*}
        \sqrt{2l}  ||(P^l(\Sigma_n)\Sigma -\Sigma ||_2 &\lesssim_{\mu^{\otimes n}} n^{\frac{\theta}{2\alpha}}\max(
        n^{-\frac{1}{2}+\frac{1}{4\alpha}},n^{-\theta+\frac{1}{4\alpha}})\\
        &= \max(n^{-\frac{1}{2}+\frac{1}{4\alpha}+\frac{\theta}{2\alpha}}, n^{-\theta+\frac{\theta}{2\alpha}+\frac{1}{4\alpha}}).
    \end{align*}

    \item From MMD convergence rates, we know:
    \begin{align*}
        \sqrt{l} ||(\Sigma -\Sigma_n) ||_2 \lesssim_{\mu^{\otimes n}} \sqrt{\frac{l}{n}}.\\
        = n^{-\frac{1}{2}+\frac{\theta}{2\alpha}}
    \end{align*}
\end{itemize}
Now combining all those terms, from eq.~(\ref{eq:gigaineq}) we get, for $0<\theta\leq\alpha$:
\begin{align*}
     ||\Sigma-\Sigma_n||_1  
    &\lesssim_{\mu^{\otimes n}} n^{-\max\left(
     -\theta+\frac{\theta}{\alpha},
     -\frac{1}{2}+\frac{\theta}{2\alpha},
     -\frac{1}{2}+\frac{1}{4\alpha}+\frac{\theta}{2\alpha}, 
     -\theta+\frac{\theta}{2\alpha}+\frac{1}{4\alpha} \right)}\\
        &\lesssim_{\mu^{\otimes n}} n^{-\max\left(
     -\theta+\frac{\theta}{\alpha},
     -\frac{1}{2}+\frac{1}{4\alpha}+\frac{\theta}{2\alpha}, 
     -\theta+\frac{\theta}{2\alpha}+\frac{1}{4\alpha} \right)}.
\end{align*}
The decreasing lines $-\theta+\frac{\theta}{\alpha}$  and $ -\theta+\frac{\theta}{2\alpha}+\frac{1}{4\alpha}$ and the increasing line $-\frac{1}{2}+\frac{1}{4\alpha}+\frac{\theta}{2\alpha}$ all cross at $\theta=\frac{1}{2}$, so we find it is the optimal $\theta$ to minimise, which gives a rate of $n^{-\frac{1}{2}+\frac{1}{2\alpha}}$.

Similarly, we do the same for the exponential decay rate case, in particular if we take $\theta=1$ we get:
\begin{equation*}
    ||\Sigma-\Sigma_n||_1 \lesssim_{\mu^{\otimes n}} \frac{(\log n)^\frac{3}{2}}{\sqrt{n}}
\end{equation*}
where the dominating term in eq.~(\ref{eq:gigaineq}) is $\sqrt{2l}  ||(P^l(\Sigma_n)\Sigma -\Sigma ||_2$.
\qed
\end{proof}

\subsubsection{Robustness properties (proof of subsection~\ref{subsec:robust})}
\label{appendsssec:robust}

\newtheorem*{prop:robust}{Proposition \ref{prop:robust}}
\begin{prop:robust}
        Denote $P_\varepsilon = (1-\varepsilon) P + \varepsilon C$ where $C$ is some contamination distribution.
        We have when Assumption~\ref{assum:kernel1} is verified: $|\dist(P_\varepsilon,Q)-\dist(P,Q)| \leq 2 \varepsilon$.
\end{prop:robust}

\begin{proof}
    Since $\dist$ is a metric based on a norm: 
    \begin{align*}
        |\dist(P_{\varepsilon},Q)-\dist(P,Q)|&\leq \dist(P,P_{\varepsilon}) \\
        &= ||\varepsilon(\Sigma_P-\Sigma_{C})||_1\\
        &= \varepsilon ||(\Sigma_P-\Sigma_{C})||_1 \\
        &\leq 2\varepsilon
    \end{align*}
    \qed
\end{proof}

The proof works for the Schatten 2-norm ($MMD_{k^2}$) as well.

On the contrary in $(\mathbb{R}^d,||\cdot||)$ for the classical Wasserstein 1-distance, if for distribution $Q$, we have $1-\frac{\varepsilon}{2}$ mass contained in a ball of some center $c$ and some radius $r$, then taking as contamination $C=\delta_x$ a Dirac in some point $x$, we must have $W_1(P_{\varepsilon},Q)\geq \frac{\varepsilon}{2}||x-c||-r$ and we can take $||x||\to\infty$ to make the distance diverge.

\subsection{Proof of \Cref{prop:computation}}\label{sec:proof_computation}

\begin{proof}
Notice that, as $ \Tilde{\varphi}(z_k) \Tilde{\varphi}(z_k)^* = (\mu_n-\nu_m)(z_k)\varphi(z_k) \varphi(z_k)^*$, we have $\Sigma_{\mu_n-\nu_m}= \sum_{k=1}^r\Tilde{\varphi}(z_k) \Tilde{\varphi}(z_k)^* = ZZ^*$.
Since $\Sigma_{\mu_n-\nu_m}$ is real symmetric, it is diagonalisable, and as a sum of $r$ projectors, it is of rank $r$ (by linear independence of the $(\varphi(z_k))_{i=1...r}$). Let us denote $(\lambda_k,v_k)_{k=1,..,r}$ the couples of eigenvalues and eigenvectors of the restriction of $\Sigma_{\mu_n-\nu_m}$ on the subspace of dimension $r$ spanned by those projectors. Those eigenvectors are orthogonal and those eigenvalues are non-zero. 
Note that for such an eigen-(value,vector) $(\lambda,v)$ of $ZZ^*$, we have: $Z^* ZZ^* v = Z^*(\lambda v)$, therefore $Z^* v$ is an eigenvector of $Z^* Z$ with associated eigenvalue $\lambda$ as well.
Then note that $(Z^*v_k)_{k=1,..,r}$ are also orthogonal and their norm is not zero, since $\langle Z^*v_i, Z^* v_j \rangle =v_i^*ZZ^*v_j=v_i^*\lambda_j v_j = \lambda_j \langle v_i, v_j \rangle$. Therefore they are distinct, and we can fully diagonalise $K$ (which is of size $r\times r$) with such vectors.
\qed
\end{proof}

\section{EXPERIMENTS}\label{appendsec:experiments}
Computation were carried on a Macbook Pro 2020 with processor 2,3 GHz Intel Core i7 (4 cores) and memory 16 Go 3733 MHz LPDDR4X (graphic card is Intel Iris Plus Graphics 1536 Mo).

\subsection{Normalised energy (Additional figures for subsection~\ref{subsec:mmdpitfalls})}
\label{appendssec:XPmmdpitfalls}

Here are some other simulations to highlight the different geometrical behaviours of MMD and $d_{KT}$: in Figure~\ref{fig:var_mean},
we compute the two distances (both with Gaussian kernel with $\sigma=1$) for two sets of $n=1000$ samples, one following a standard normal law $\mathcal{N}(0,1)$ and the other following $\mathcal{N}(\theta,1)$ and we try for different values of $\theta$ from $0$ to $10$ with steps of $0.5$. As can be seen, even when the two locations are far apart, since the two distributions are not Dirac, their variances prevent MMD to reach the maximum 2. This means that when the distance is getting close to zero in general the slope will be flatter for MMD than for $d_{KT}$: one should be aware of that when doing for instance a gradient descent. We also added the Kernel Bures-Wasserstein (computed with a simplified kernel trick as in~\citet{KWD}) to illustrate the Fuchs-van de Graaf inequalities. For $\theta$ high enough, the lower bound joins $\dist$ (they naturally cannot go higher than 2 because of Assumption~\ref{assum:kernel1}).

In Figure~\ref{fig:var_std} 
this time, the same experiment is shown but this time between distributions $\mathcal{N}(0,s)$ and $\mathcal{N}(100,s)$, where we make the variance $s$ varies in $[0.1, 0.3, 1, 3, 10, 30, 100]$. As the variance grows, the Hilbert norm of the distributions decreases and so the MMD decreases very fast. On the contrary, $d_{KT}$ takes its maximum value 2 distinguishing the far-away distributions and starts only to decreases for high values of $s$ down to close to 1 for $s=100$ (which the distance between the two locations), which seems reasonable as, for low variance values, the support of the majority of the central masses of each distributions do not even overlap!







\begin{figure}[!htb]
\centering
\begin{minipage}{0.7\textwidth}
     \centering
     
     \includegraphics[width=\linewidth]{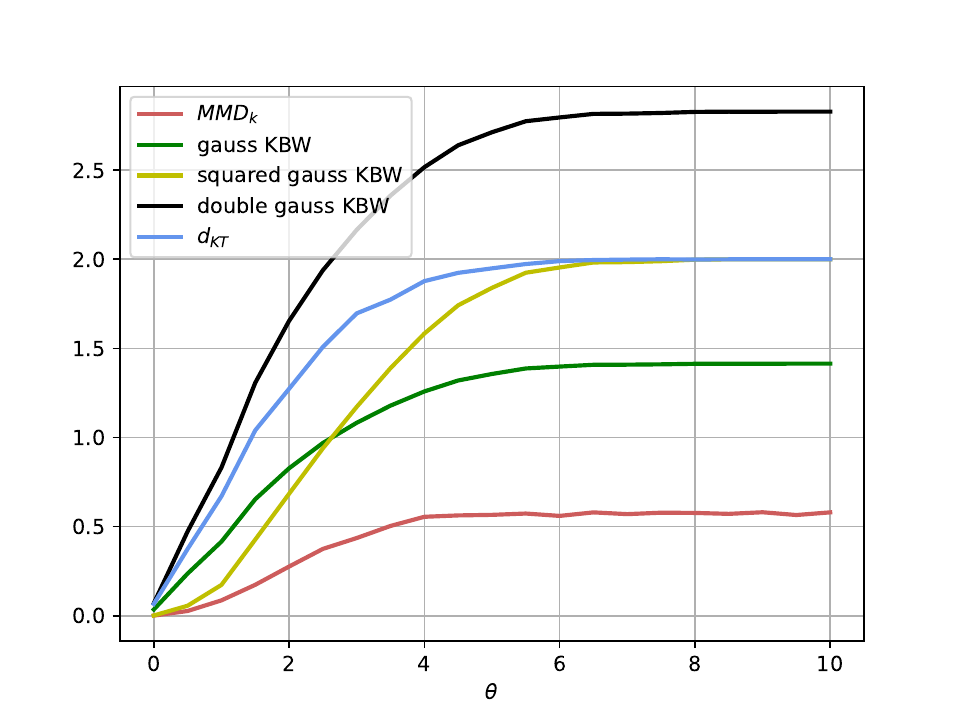}
         \caption{Variations on the mean $\theta$}
         \label{fig:var_mean}

\end{minipage}

\begin{minipage}{0.7\textwidth}

     \centering
     \includegraphics[width=\linewidth]{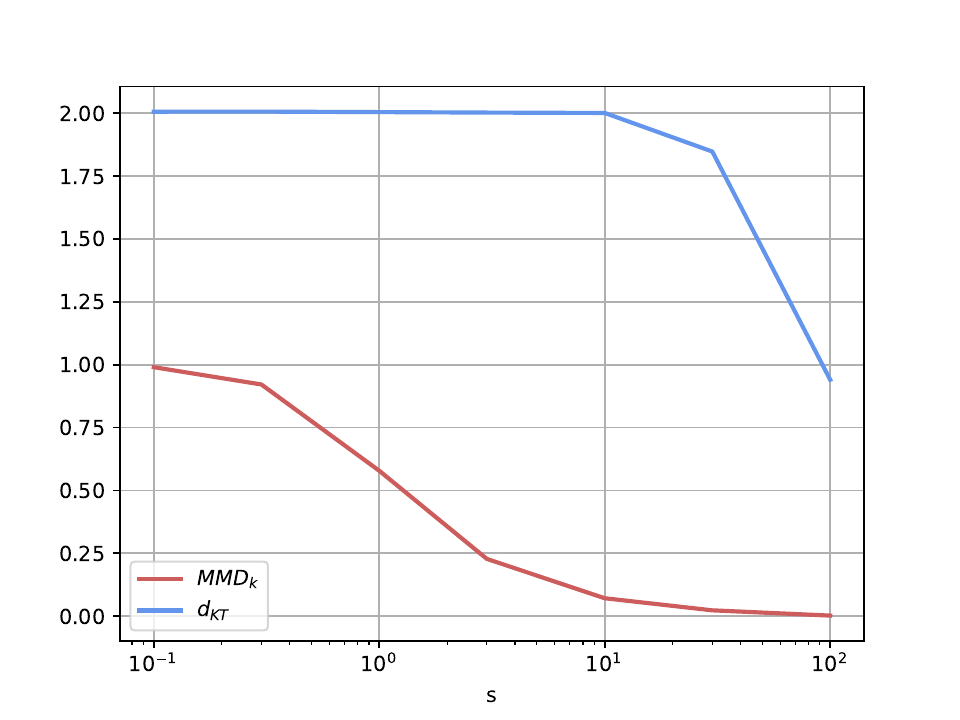}
         \caption{Variations on the standard deviation $s$}
         \label{fig:var_std}

\end{minipage}

\end{figure}

\subsection{ABC (Additional table and figures)}
\label{appendssec:XPABC}

In Table~\ref{tab:abcsupp}, we display more complete results of the experiments adding other methods as well as the standard deviations over the different runs. 
Referring to the distances of Corollary~\ref{cor:wasserineq}, we added the Optimal Transport (OT) 1-Wasserstein distance with Euclidean cost (as we mentioned it rejects all samples because of the contamination), as well as the one with a distance cost based on the Gaussian kernel $c_k$ (noted as $\mathrm{OT}_{gauss}$ in the table) and the Kernel Bures-Wasserstein distance ($d_{KBW}$). 
We added some ``normalised" MMD: 
\begin{equation*}
    MMD_N(\mu,\nu) =\frac{\SL{2} \Sigma_\mu - \Sigma_\nu \SR{2} }{ \sqrt{\SL{2} \Sigma_\mu\SR{2}^2 + \SL{2} \Sigma_\nu\SR{2}^2}}
\end{equation*} 
from the inequality of Section~\ref{subsec:mmdpitfalls} in order to attenuate its effect. It is not guaranteed to be a distance, and eventually it does not perform better than $\dist$.
For the sake of fairness, we compare ourselves to competitors that also require cubic complexity of computation such as the Kernel Fischer Discriminant Analysis (KFDA~\citep{KFDA}) with parameter $\gamma_n = n^{-1/2} = 0.1$ but it also rejects all the time, so we added its normalised version as well.

\begin{table}[!htb]
\caption{Average MSE of ABC Results. \\ 
The Optimal Transport Wasserstein distance (OT) has been added, which rejects all samples.
The KFDA is used with $\gamma_n = n^{-1/2} = 0.1$.
\label{tab:abcsupp} }
\centering
\begin{tabular}{llrr}
\toprule
 $\varepsilon$ & distance & \#accept. (\textsl{std}) & MSE (\textsl{std})  \\
\toprule
\multirow{5}{*}{0.05} & OT & 0  &  N/A   \\
& KFDA & 0  &  N/A   \\
& $\mathrm{KFDA}_{norm.}$ & 1457 (\textsl{123})  &  0.41 (\textsl{0.05})  \\
& $\mathrm{MMD}$ & 1092 (\textsl{45}) &  0.19 (\textsl{0.02})   \\
& $\mathrm{MMD_N}$  & 0 &  N/A   \\
& $\mathrm{MMD_E }$ & 0 &  N/A   \\
& $d_{KBW}$  & 0 & N/A   \\
& $\mathrm{OT}_{gauss}$  & 0 & N/A   \\
                    & $\dist$  & 0 & N/A   \\
 \midrule
\multirow{5}{*}{0.25}  & OT & 0  &  N/A   \\
& KFDA & 0  &  N/A   \\
& $\mathrm{KFDA}_{norm.}$ & 1557 (\textsl{122})  &  0.45 (\textsl{0.05})  \\
& $\mathrm{MMD}$ & 2964 (\textsl{92}) & 1.29 (\textsl{0.06})    \\
& $\mathrm{MMD_N}$  & 840 (\textsl{30}) &  0.12 (\textsl{0.01})   \\
& $\mathrm{MMD_E }$ & 0 &  N/A   \\
& $d_{KBW}$  & 0 & N/A   \\
& $\mathrm{OT}_{gauss}$  & 343 (\textsl{48})  & 0.04 (\textsl{0.01})    \\
                    & $\dist$  & 58 (\textsl{25}) & \textbf{0.03} (\textsl{0.01})   \\
\midrule
\multirow{5}{*}{0.5}  & OT & 0  &  N/A   \\
& KFDA & 0  &  N/A   \\
& $\mathrm{KFDA}_{norm.}$ & 1673 (\textsl{118})  &  0.49 (\textsl{0.05})  \\
& $\mathrm{MMD}$ & 6168 (\textsl{406}) & 7.47 (\textsl{1.83})   \\
& $\mathrm{MMD_N}$  & 1964 (\textsl{69})  &  0.57 (\textsl{0.02})   \\
& $\mathrm{MMD_E }$ & 846 (\textsl{35})  &  0.17 (\textsl{0.05})    \\
& $d_{KBW}$  & 1312 (\textsl{49}) & 0.26 (\textsl{0.02})  \\
& $\mathrm{OT}_{gauss}$  & 1376 (\textsl{53})  & 0.29 (\textsl{0.02})   \\     
                    & $\dist$  & 828 (\textsl{34}) & 0.12 (\textsl{0.01})   \\
\midrule
\multirow{5}{*}{1}   & OT & 0  &  N/A   \\
& KFDA & 0  &  N/A \\
& $\mathrm{KFDA}_{norm.}$ & 1847 (\textsl{121})  &  0.57 (\textsl{0.05})  \\
& $\mathrm{MMD}$ & 10000 (\textsl{0}) &  26.0 (\textsl{0.18})   \\
& $\mathrm{MMD_N}$  & 9488 (\textsl{57})  &  20.4 (\textsl{0.31})    \\
& $\mathrm{MMD_E }$ & 2926 (\textsl{52})  &  1.33 (\textsl{0.6})    \\
& $d_{KBW}$  & 3709 (\textsl{54}) & 2.02 (\textsl{0.05})  \\
& $\mathrm{OT}_{gauss}$  & 3484 (\textsl{84})  & 1.78 (\textsl{0.06})   \\
& $\dist$  & 2067 (\textsl{93}) & 0.63 (\textsl{0.04})   \\
\bottomrule
\end{tabular}
\end{table}

In Figure~\ref{fig:posteriors} are displayed the simulated posteriors $\frac{1}{|L_\theta|} \sum_{\theta_i \in L_\theta} p(\cdot|\theta_i)$ obtained as a result of Rejection ABC Algorithm~\ref{alg:reject_abc} using the Gaussian kernel for both MMD and $\dist$. For sensible parameter $\varepsilon$, the posteriors obtained via $\dist$ are quite close to the target, while for for MMD (gaussian) the posterior stays very flat like the prior unless $\varepsilon$ goes to a very low value (less than the contamination threshold).
For sake of visibility, the best posteriors of normalised MMD and MMD with the energy kernel are displayed in another Figure~\ref{fig:otherposteriors}. For the energy kernel, we can see that the peak of the density is not aligned with the one of the target.

\begin{algorithm}[!htb]                      
                \caption{Rejection ABC Algorithm}
                \label{alg:reject_abc}
                \footnotesize
                \begin{algorithmic}[1]
                        \REQUIRE {Observed data $\{X_{i}\}_{i=1}^{n}$, prior $\pi(\theta)$ on the parameter space $\Theta$, tolerance threshold $\epsilon$, statistical distance $d$, empty list $L_\theta$}
                        \FOR {$i=1$ to $T$}
                        \STATE draw $\theta_i \sim \pi(\theta)$ 
                        \STATE draw $Y_{1} \ldots,Y_{m} \overset{\textrm{i.i.d.}}{\sim} p_{\theta_i}$
                        \IF {$d(X^{n}, Y^{m}) < \epsilon$}
                            \STATE  Add $\theta_i$ to $L_\theta$
                        \ENDIF
                        \ENDFOR
                        \RETURN $L_\theta$
                \end{algorithmic}
\end{algorithm}

\begin{figure}[!htb]
     \centering
     \begin{minipage}{.7\textwidth}
     \centering
     \includegraphics[width=\linewidth]{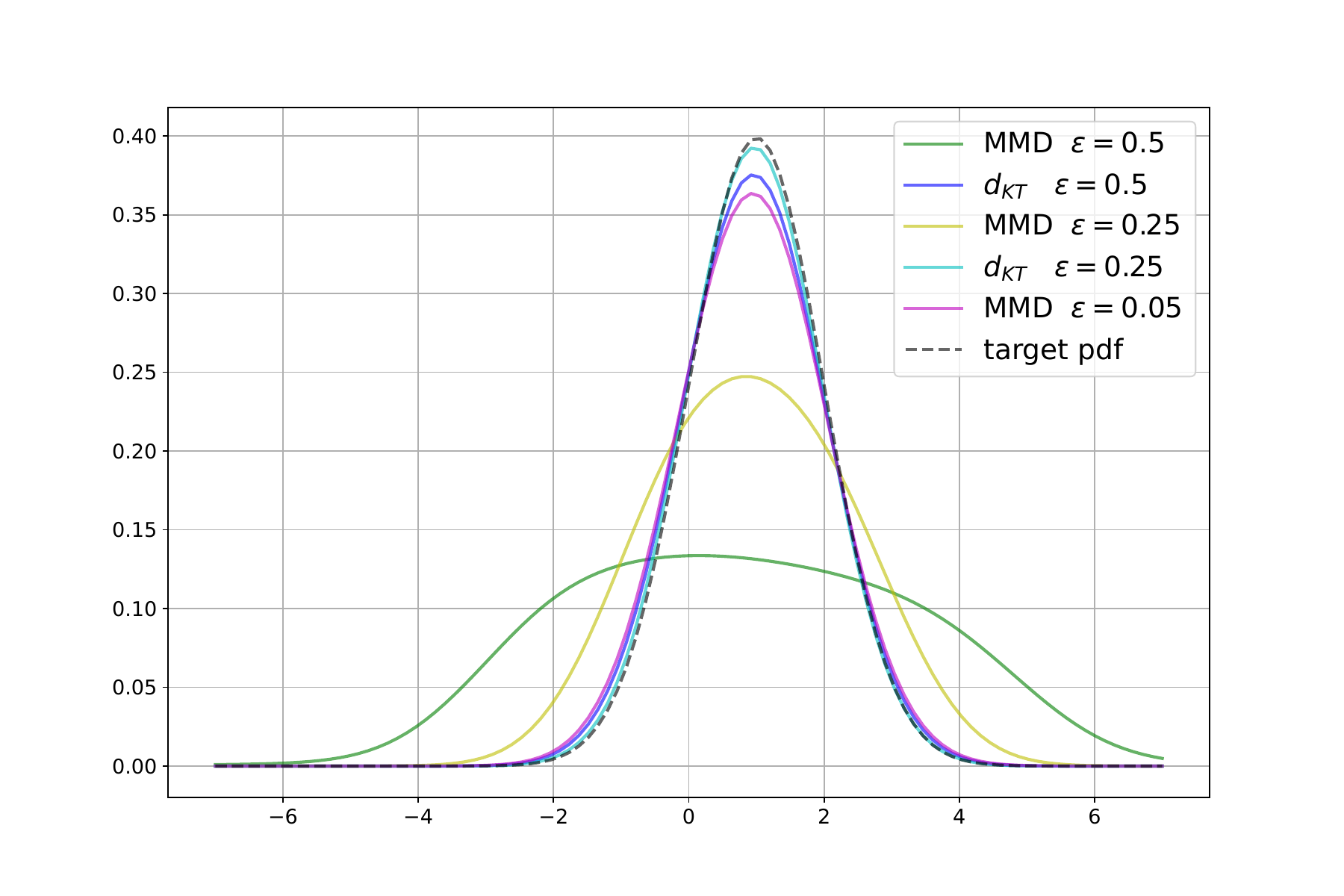}
         \caption{Posterior probability density functions using Gaussian kernel}
         \label{fig:posteriors}
    \end{minipage}
    \begin{minipage}{.7\textwidth}
     \centering
     \includegraphics[width=\linewidth]{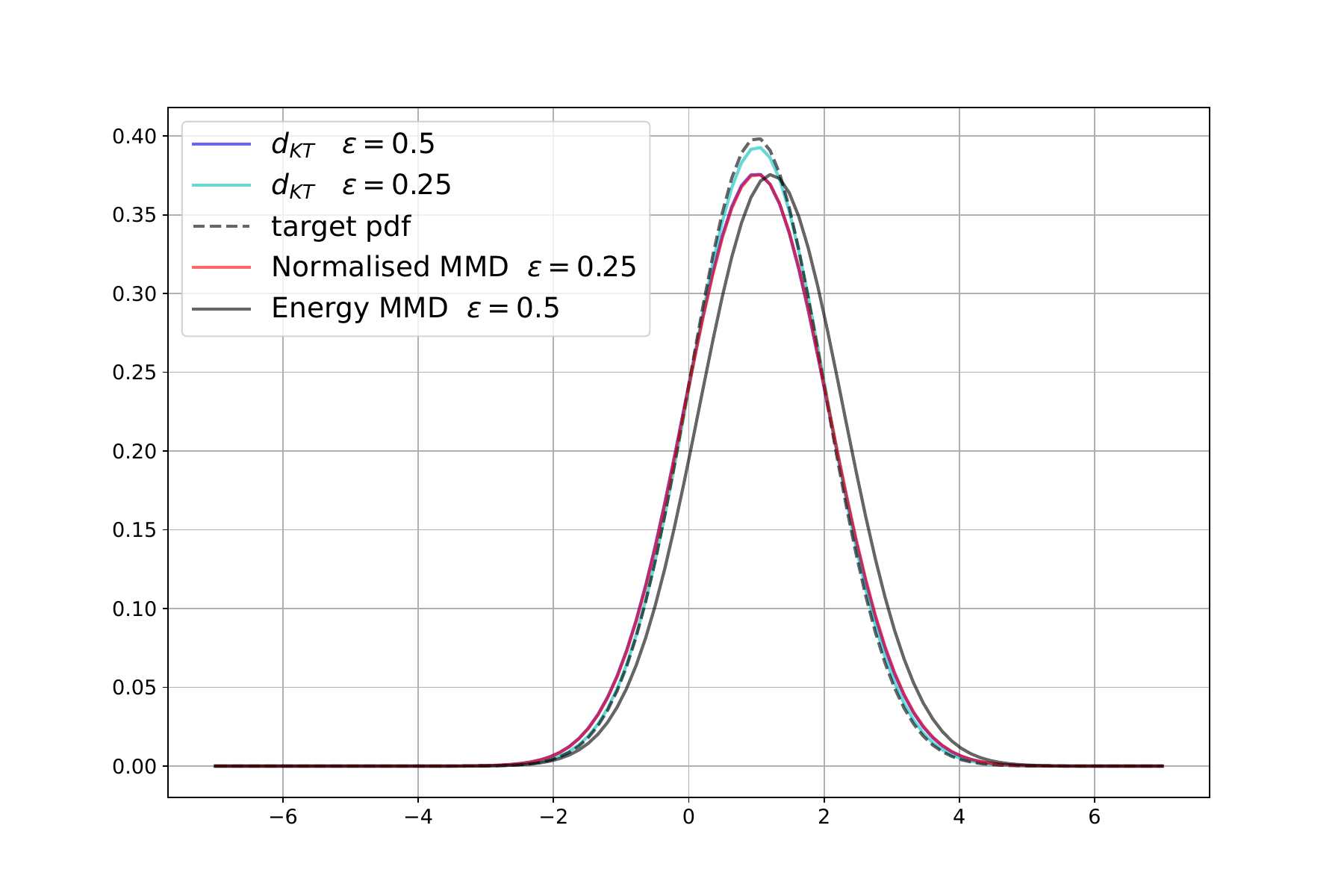}
         \caption{Posterior probability density functions of $\dist$ and other competitors than classic MMD Gaussian kernel}
         \label{fig:otherposteriors}
    \end{minipage}
\end{figure}




\subsection{Particle gradient flows}\label{appendssec:XPGF}
Here are displayed at different iterations the particle flows of $\dist$ (Fig.~\ref{fig:flowd1}) and MMD (Fig~\ref{fig:flowmmd}).
We choose the Laplacian kernel over the Gaussian kernel, as it gave a better convergence results for both $\dist$ and MMD. Even though it is not differentiable at the coordinates of the target particles, in practice computationally we observed that no problem occured as this typically would happen when the cloud of points are reaching destination. 




\begin{figure}[!htb]
     \centering
     \begin{minipage}{.7\textwidth}
     \centering
     \includegraphics[width=\linewidth, trim={0 0.3cm 0 0.3cm},clip]{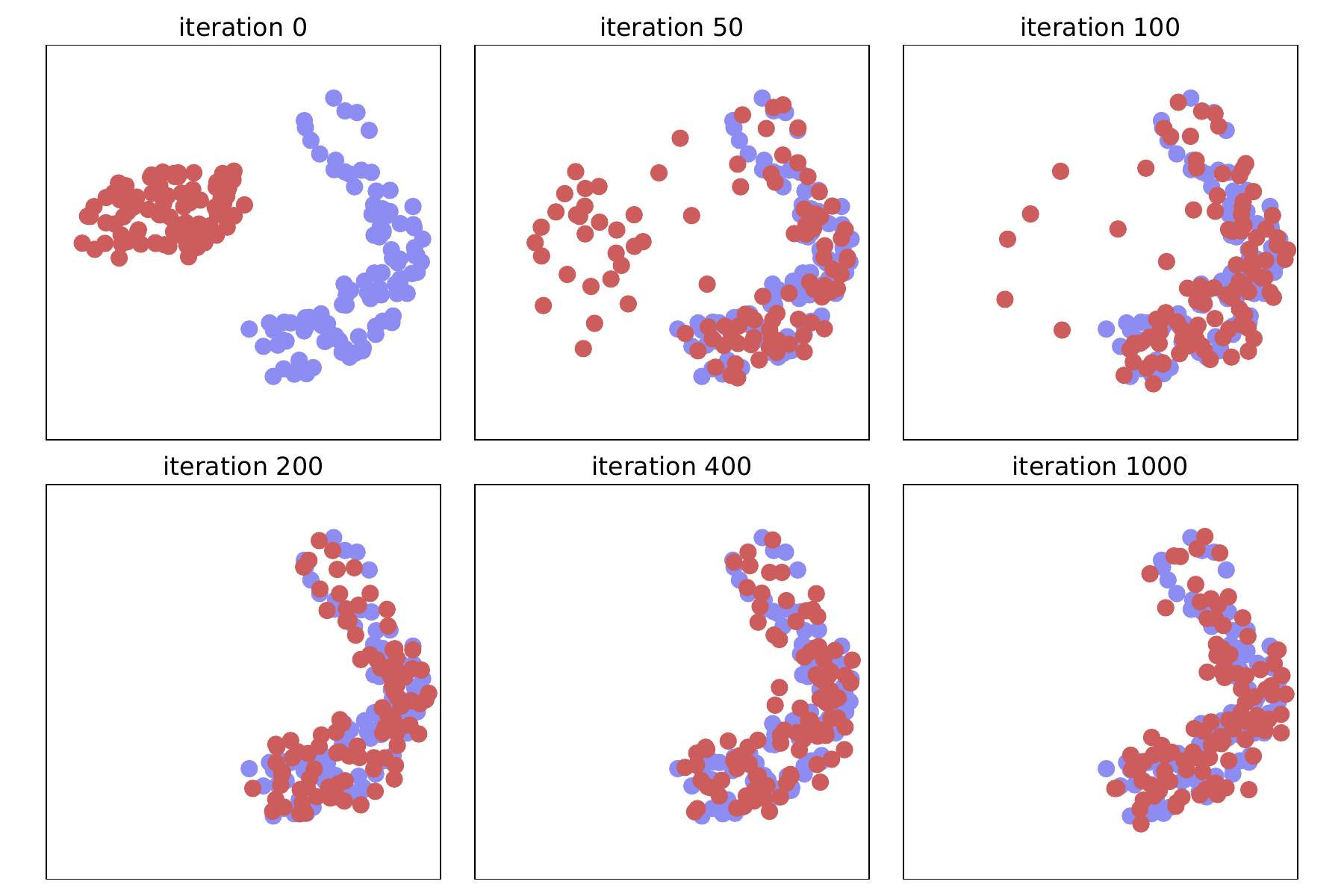}
         \caption{Particle flow with $\dist$ leads to a good match between the distributions}
         \label{fig:flowd1}
    \end{minipage}    
    \par\vskip 1cm
    \begin{minipage}{.7\textwidth}    
     \centering
     \includegraphics[width=\linewidth]{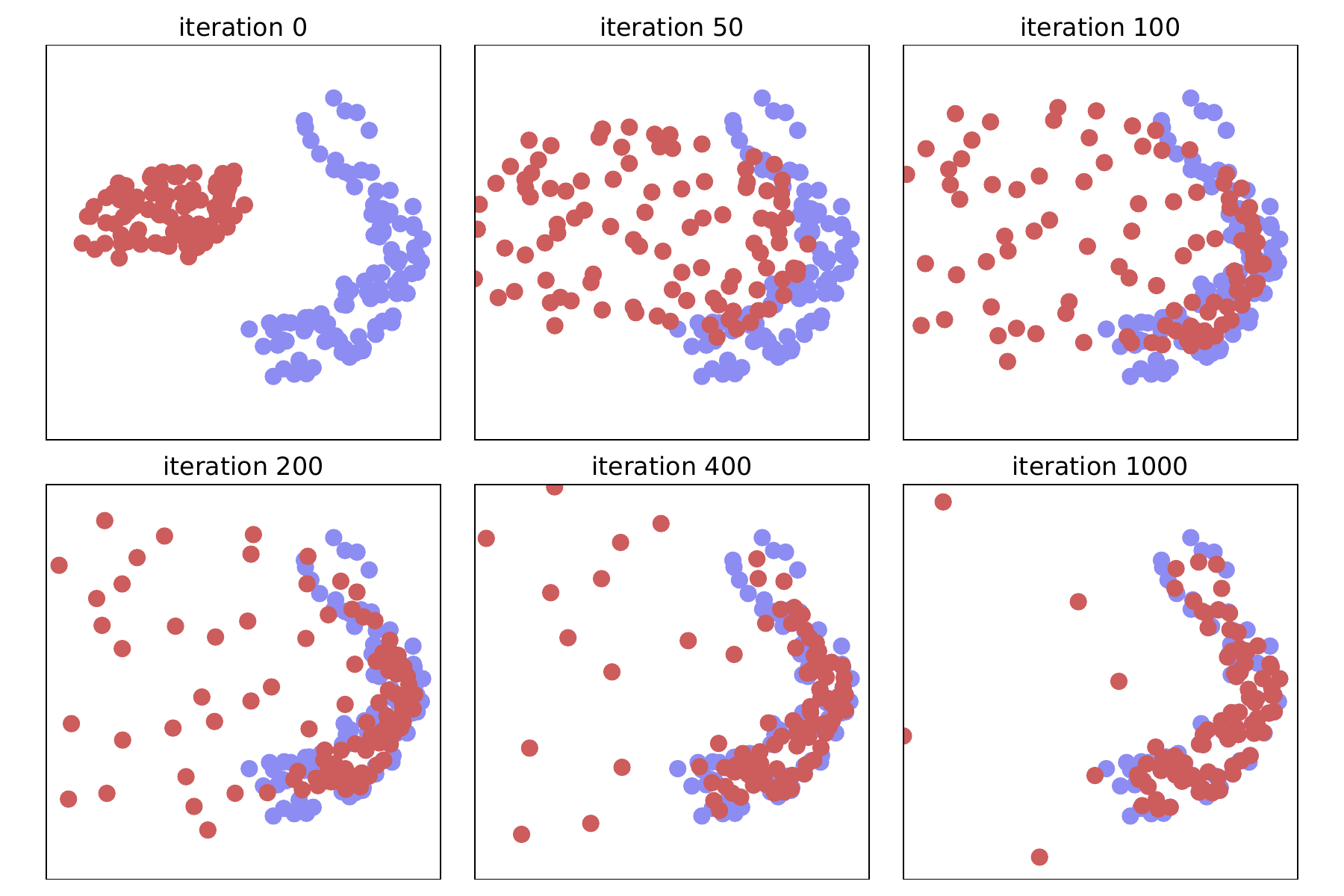}
         \caption{Particle flow with MMD leads to several samples being ``repulsed" due to internal energy.
         \label{fig:flowmmd}}
    \end{minipage}
\end{figure}

We also add a shape transfer task as in~\citet{chazal2024statistical} displayed in Figure~\ref{fig:shapetransfer}, where we added to their results our $\dist$ using the Laplacian kernel with bandwidth $\sigma=1$ again and with a learning rate of 5. The other methods comprises the $KKL_\alpha$ with $\alpha=0.01$ which is a regularised version of the KKL which uses the same RKHS density operators as us but using the Von Neumann quantum relative entropy and is also in cubic time complexity, and KALE with parameter $\lambda = 0.001$ which approaches the classic KL divergence, and $\lambda = 10000$ which approaches MMD (and thus we call it so in the figure). 

\begin{figure}[!htb]
     \centering
     \includegraphics[width=\linewidth]{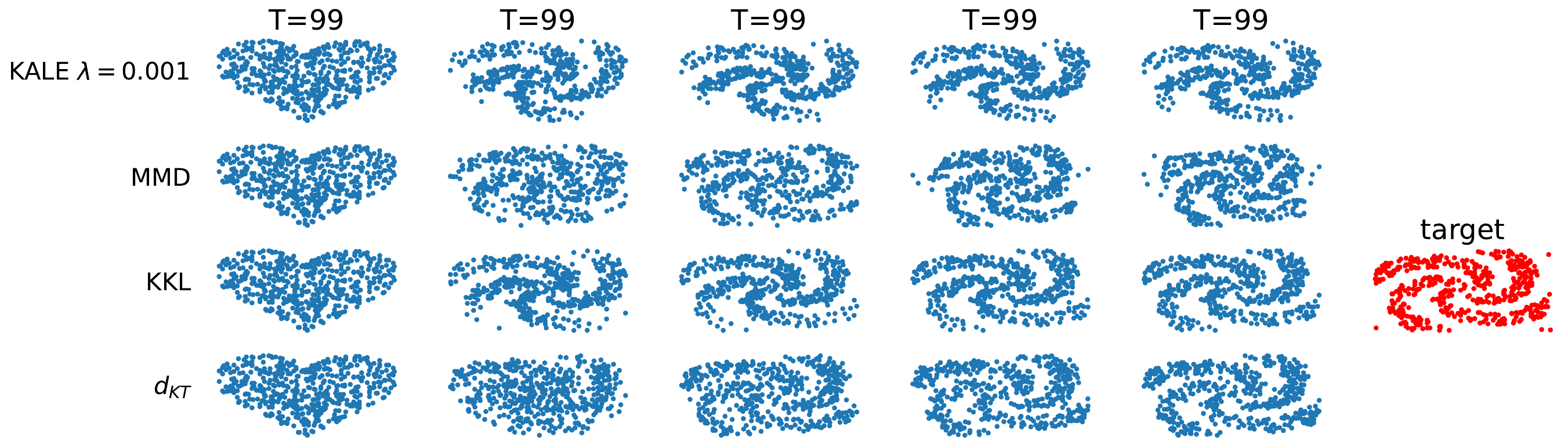}
         \caption{Shape transfer}
         \label{fig:shapetransfer}
\end{figure}


\end{document}